\documentclass{article} %
\usepackage{iclr2027_conference,times}

\usepackage{amsmath,amsfonts,bm}

\def\eqref#1{equation~\ref{#1}}

\def\1{\bm{1}}

\DeclareMathAlphabet{\mathsfit}{\encodingdefault}{\sfdefault}{m}{sl}
\SetMathAlphabet{\mathsfit}{bold}{\encodingdefault}{\sfdefault}{bx}{n}

\usepackage{latexsym}
\usepackage{amsthm}

\usepackage{graphicx}
\usepackage{hyperref}
\usepackage{bm}
\usepackage{url}
\usepackage{wrapfig}
\usepackage{graphicx} 
\usepackage{booktabs}
\usepackage{tabularx}
\usepackage{xcolor}
\usepackage{amssymb}
\usepackage{amsmath}
\usepackage{array}
\usepackage{cleveref}
\usepackage{tikz}
\usepackage{caption}

\crefformat{equation}{equation~#2#1#3}
\Crefformat{equation}{Equation~#2#1#3}
\crefname{appendix}{appendix}{appendices}
\Crefname{appendix}{Appendix}{Appendices}

\usepackage{thmtools} 
\usepackage{thm-restate}

\declaretheorem[name=Lemma,shaded={bgcolor={blue!10!white}}]{lmm}

\newcommand{\restate}[1]{\csname #1\endcsname*}

\usepackage{todonotes}

\usepackage{xspace}
\newcommand{\com}{\texttt{COM}\xspace}
\newcommand{\comemb}{\texttt{COM-EMB}\xspace}

\title{There is No Theoretical Curse of Multilinguality For Embedding Space Structure}

\author{
Niyati Bafna$^{1}$ \quad
Neha Verma$^{1}$ \quad
Vilém Zouhar$^{2}$ \quad
Philipp Koehn$^{1}$ \quad
David Yarowsky$^{1}$
\\[0.6em]
\hspace{0.3cm}$^{1}$Johns Hopkins University, Center for Language and Speech Processing\quad
$^{2}$ETH Zürich\\[0.6em]
\hspace{5.3cm}\texttt{nbafna1@jhu.edu}
}

\iclrfinalcopy %
\begin{document}
\maketitle
\lhead{}
\begin{abstract}
A central goal of multilingual NLP is to achieve high monolingual performance per language and cross-lingual alignment for large-scale language coverage with a multilingual model.
The \emph{curse of multilinguality} describes the phenomenon of degradation in multilingual model performance as we increase language coverage, posing a threat to the above goal. 
This paper asks whether multilingual embedding spaces are inherently incapable of achieving perfect multilinguality without a prohibitive increase in required capacity.
We first formalize the goal of ``perfect multilinguality'', embodied in two \emph{multilinguality conditions}.
We then prove that the minimum dimensionality required for perfect multilinguality grows only logarithmically, as $\Theta(\log L)$, in the number of languages $L$.
That is, we show that there is no theoretical curse of multilinguality for embedding space structure.
This suggests that the empirical curse of multilinguality is a result of real world data and training conditions.
We back this understanding with a small-scale empirical study.
Our paper provides the first theoretical and intrinsic perspective on the curse of multilinguality.\footnote{The code for our experiments is available at \href{https://github.com/niyatibafna/curse-of-multilinguality}{github.com/niyatibafna/curse-of-multilinguality}.}
\end{abstract}

\section{Introduction}
The ideal multilingual model exhibits high monolingual performance on individual languages as well as perfect  cross-lingual structure, or semantic sharing across languages, for arbitrary language coverage.
The well-known \emph{curse of multilinguality} (\com, \citealp{conneau-etal-2020-unsupervised}), also sometimes termed \emph{negative interference} \citep{shaham-etal-2023-causes}, describes the phenomenon of observed degradation of multilingual performance as we train a fixed-capacity model on increasingly more languages. 
Folk wisdom generally understands fixed capacity to be an inherent bottleneck \citep{gurgurov-etal-2024-multilingual}, resulting in ``less space'' for each language, indicating that perfect multilinguality as described above may not be achievable for high language coverage.
Several previous works have demonstrated the \com from the perspective of extrinsic task performance \citep{conneau-etal-2020-unsupervised,aharoni-etal-2019-massivel}, and while other works have explored strategies to mitigate this issue \citep{wang-etal-2020-balancing,pfeiffer-etal-2022-lifting}, it remains a serious challenge.
This gives rise to a fundamental question: is it inherently impossible to accommodate increasing language coverage in a limited-capacity model while maintaining desired monolingual and cross-lingual performance on all languages?
We provide a theoretical and intrinsic perspective on the \com.
Specifically, we study the curse of multilinguality for embedding space structure (\comemb), and show that multilingual representations can exhibit intrinsic quality while accommodating more languages without a prohibitive increase in capacity. 

First, we formulate two intrinsic requirements for a \emph{perfect multilingual} embedding space, drawing from consensus in the field \citep{chang-etal-2024-when,hammerl-etal-2024-understanding}, and term them \emph{multilinguality conditions}: (a) the space exhibits high-quality shared monolingual semantics for each language, and (b) the space exhibits cross-lingual alignment, meaning that translation equivalents across languages are ``close'' to each other in the space. 
This is important for applications such as cross-lingual information retrieval \citep{sasaki2018cross,roy-etal-2020-lareqa}. 
In the context of embedding spaces, ``capacity'' can be understood as dimensionality.
Then, our theoretical result shows that the minimum dimensionality of a perfect multilingual space encoding $L$ languages grows as $\Theta(\log L)$.
Specifically, we assume a base $D$-dimensional concept space and show that the minimum sufficient extra dimensionality to accommodate $L$ languages while maintaining the multilinguality conditions grows as $O(\log L)$, and that the dimensionality of any such space grows at least as $\Omega(\log L)$.
This result shows that multilinguality is not entirely free: it requires an additional capacity cost depending on $L$. 
However, given that the number of languages in the world is finite and usually estimated at around $\sim$7000 \citep{eberhard2021ethnologue}, the log dependence of the cost indicates only a small multilinguality tax as we increase language coverage, provably as small as $30$ extra dimensions for $7000$ languages. 
Thus, we show that there is no theoretical curse of multilinguality for embedding space structure.

The above finding suggests that any empirical curse of multilinguality for embedding space structure is a consequence of real-world conditions such as data and compute constraints, among other things.
We characterize the empirical \comemb via a controlled study of the empirical degradation of the multilinguality conditions with increasing language coverage in Transformer-based models, in eight different configurations varying compute constraints, sampling conditions, and the set of languages being evaluated. 
We show that while we do observe ``cursed'' behaviour in line with general wisdom, this behaviour depends on these configurations and is mitigated in favourable conditions.

Our paper contributes the first theoretical and intrinsic grounding to the curse of multilinguality. 
In showing that perfect intrinsic multilinguality is theoretically achievable at an arbitrary language scale, we set the bar for empirical multilingual spaces, and promote future work in exploring and closing the gap.

\section{Background}

\paragraph{Dimensionality bounds on embedding spaces} 
Recent work studies the embedding dimension required to express top-$k$ subsets of $N$ documents by single-vector retrieval. \citet{weller2026on} show that, given a fixed margin of separation, the necessary embedding dimension has a logarithmic dependence on the total number of subsets $\binom{N}{k}$, using classical sphere packing arguments \citep{vershynin2019high,conway1999recent}.
\citet{wang2026mathbb} show that $d=2k+1$ is a sufficient dimension for expressing all top-$k$ subsets, independent of $N$, and \citet{okajima2026limits} subsequently show that finite-precision quantization introduces a dependence on $N$ for the necessary dimension, which must grow logarithmically with the corpus size for a fixed precision.
Our work connects this line of study with fundamental questions in multilingual NLP.

\paragraph{Curse of multilinguality}
The ``curse of multilinguality'' or phenomenon of ``negative interference'' is understood as the observed degradation in multilingual model performance as the number of training languages is increased for a fixed capacity model, commonly described in terms of capacity dilution over languages \citep{aharoni-etal-2019-massivel,pfeiffer-etal-2022-lifting}.
While multilinguality provides benefits up to a point especially for low-resource languages via cross-lingual transfer from other language training data, high-resource language performance usually suffers; further, after a point, all language performance suffers from including more training languages, constituting a ``transfer-dilution'' tradeoff \citep{arivazhagan-etal-2019-massively,chang-etal-2024-when,gurgurov-etal-2024-multilingual}.
\citet{shaham-etal-2023-causes} show that the proportion of the target language with respect to total training data is an important factor, and that increasing model size helps, raising the question of the minimum model size required to accommodate a given language coverage.
\citet{longpre-etal-2026-atlas} develop empirical scaling laws for model and data size while increasing language coverage.
There are several works that attempt to mitigate the phenomenon of degradation, including strategies for better data sampling \citep{wang-etal-2020-balancing,kreutzer-etal-2021-banditsa,foroutan-etal-2025-revisiting}, architectural innovations such as mixture-of-expert models \citep{pfeiffer-etal-2022-lifting}, and optimization techniques such as Gradient Vaccine \citep{wang-etal-2020-gradienta}.
The above works explore the curse of multilinguality from a purely empirical lens.
To the best of our knowledge, we are the first to provide a theoretical lens on the curse of multilinguality.

\paragraph{Intrinsic characterization of embedding spaces}

Previous works in the curse of multilinguality study embedding space quality only indirectly by measuring extrinsic performance on tasks such as XNLI \citep{conneau2018xnli} and machine translation \citep{aharoni-etal-2019-massivel}.
However, there is broad general interest in investigating the structure of multilingual spaces, including assessing cross-lingual overlap and alignment \citep[inter alia]{xu-etal-2023-structural,wen2023hyperpolyglot,shah-etal-2023-geometry,mousi-etal-2024-exploring,hammerl-etal-2024-understanding}, and the encoding of language versus semantic information \citep{chang-etal-2022-geometrya,xie2022discovering}.
Tasks such as monolingual and cross-lingual retrieval and bitext mining depend directly on representation structure \citep{roy-etal-2020-lareqa,enevoldsen2025mmteb}, and inspire the formulation of our multilinguality conditions (\Cref{sec:defining_conditions}).
However, to the best of our knowledge, no previous work seeks to understand the curse of multilinguality from the intrinsic perspective of its impact on embedding space structure.

\section{Multilinguality Conditions for Perfect Multilingual Spaces}
\label{sec:defining_conditions}

In this section, we formulate conditions necessary for perfect multilinguality, given a reference high-quality monolingual concept space.

\vspace{-3mm}

\paragraph{Preliminaries}
We begin by defining key structures relevant to our argument. 
Let $\mathcal{C}$ be a global set of concepts, and let $\mathcal{L}$ be a set of languages.
Let $Z$ be the reference concept space, where $c \in \mathcal{C}$ is embedded as $z_c$ with high-quality semantic relationships.
For the multilingual embedding space $X^L$ encoding $L=|\mathcal{L}|$ languages, let $x_{c,\ell}$ represent the embedding for concept $c \in \mathcal{C}$ in language $\ell \in \mathcal{L}$.
Let $X^L_\ell \subset X^L$ denote the monolingual subspace of language $\ell \in \mathcal{L}$.
We use $\cos(\cdot, \cdot)$ to refer to the cosine similarity operator.
Finally, we use $\gamma$ to refer to the resolution scale of $Z$ by which any two distinct points are minimally separated: i.e. we have $\gamma<1$ such that $\forall c, c' \in \mathcal{C}, c\neq c', \cos(z_c, z_{c'}) < \gamma$.

\paragraph{Monolingual structure} 
A perfect multilingual space should encode high-quality shared monolingual relationships for each encoded language.
We use the high-quality concept space $Z$ as reference.
Then, perfect monolingual structure requires that for each language $\ell$ represented in space $X^L$, its monolingual structure $X^L_\ell$ exactly mimics $Z$.\footnote{In practice, we do not know what perfect concept relationships are. However, we can consider the monolingual space of a high-resource language (e.g. English) as a reference space, making the monolingual structure condition equivalent to requiring parity for monolingual quality and shared semantics across languages.}

Monolingual structure can be understood in terms of relative similarity judgments between pairs of concepts.
Thus, perfect monolingual structure can be understood as the monolingual subspace of each language exhibiting identical similarity rankings as the reference concept space, resulting in high-quality and shared semantics across languages.
\begin{restatable}[Monolingual Structure]{dff}{dff:monolingual-structure}
\label{dff:monolingual-structure}
A multilingual space $X^L$ satisfies the monolingual structure condition if each encoded monolingual subspace in it is identical in terms of similarity rankings ($\sim_R$) to a high-quality concept space $Z$, i.e. $\forall \ell \in \mathcal{L}, X^L_\ell \sim_R Z$. 
This condition requires that $\forall \ell \in \mathcal{L}$ and $c, c', d, d' \in \mathcal{C}$,
$\cos(z_c,z_{c'}) \le \cos(z_d,z_{d'}) \iff \cos(x_{c,\ell},x_{c',\ell}) \le \cos(x_{d,\ell},x_{d',\ell})$.
\end{restatable}

\newcommand{\word}[1]{\emph{#1}}

\begin{center}
    \begin{minipage}{0.35\linewidth}
    \vspace{-5mm}
    \includegraphics[width=\linewidth]{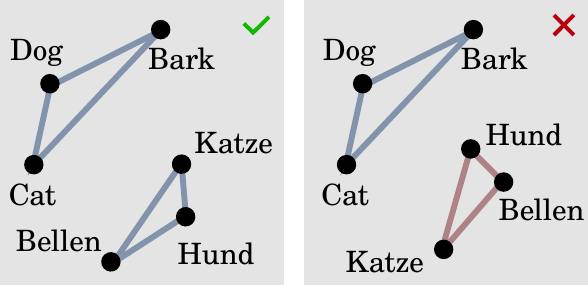}
    \end{minipage}
    \hfill
    \begin{minipage}{0.63\linewidth}
    \captionof{figure}{Illustration of \Cref{dff:monolingual-structure}. \word{Dog=Hund}, \word{Cat=Katze}, \word{Bark=Bellen}. The concept space is depicted in English. The German space on the left exhibits identical ordering of neighbours with the concept space while the German space on the right does not, because the nearest neighbour of \word{Katze(=Cat)} is \word{Bellen(=Bark)} and not \word{Hund(=Dog)}.}
    \label{fig:definition_1}
    \end{minipage}
\end{center}

\paragraph{Cross-lingual alignment}

A perfect multilingual space displays cross-lingual alignment, understood as the property that language variants of a given concept are close together, and farther from other concepts.
Past literature distinguishes a ``strong'' and ``weak'' view of this idea \citep{roy-etal-2020-lareqa,hammerl-etal-2024-understanding}. 
In the ``strong'' view, we require a concept in language $\ell$ to be closer to its translation equivalent in language $\ell'$ than any different concept in \textit{any} language, including other concepts in $\ell$, whereas in the weak view, it is only required to be closer than any other $\ell'$-language points.
We use the strong view, which also trivially implies the weak one.

\begin{restatable}[Cross-lingual alignment]{dff}{dff:cross-lingual-alignment}
\label{dff:cross-lingual-alignment}
A multilingual space $X^L$ satisfies the cross-lingual alignment condition if a concept encoding in a given language is closer to the encoding of the same concept expressed in any other language than to the encoding of any other concept in any language.
That is, this condition requires that for all concepts $c, c' \in \mathcal{C}$ where $c\neq c'$ and languages $\ell, m, n \in \mathcal{L}$, $\cos(x_{c,\ell},x_{c,m}) > \cos(x_{c,\ell},x_{c',n})$.
\end{restatable}
\begin{center}
    \begin{minipage}{0.35\linewidth}
    \vspace{-1mm}
    \includegraphics[width=\linewidth]{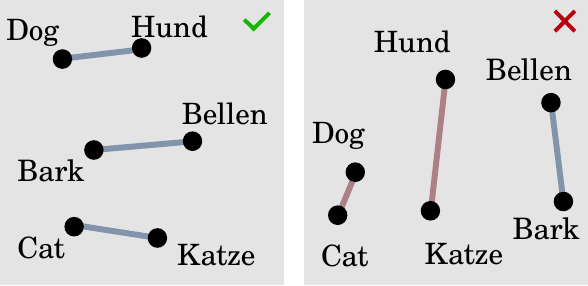}
    \end{minipage}
    \hfill
    \begin{minipage}{0.63\linewidth}
    \captionof{figure}{Illustration of \Cref{dff:cross-lingual-alignment}, with two languages. In the left space, \word{Hund(=Dog)} is closer to \word{Dog} than any different-concept word, but in the right space, the different-concept word \word{Cat} is closer to \word{Dog} than \word{Hund=(Dog)}.}
    \label{fig:definition_2}
    \end{minipage}
\end{center}

\paragraph{Non-degeneracy.}
A multilingual space can place all concept translation equivalents as co-incident at the same point in the reference high-quality monolingual space, and thus trivially exhibit monolingual structure as well as cross-lingual alignment.
However, this violates common practical assumptions of embedding spaces where an embedding space must be capable of distinguishing between any two embedded points with some fixed scale of resolution \citep{weller2026on}. 
Thus, we require that our multilingual space have a resolution scale, or a maximum cosine similarity $\gamma_{X}<1$ such that for any distinct $(c,\ell),(d,m)\in \mathcal{C}\times \mathcal{L}$,  $\cos(x_{c,\ell},x_{d,m}) < \gamma_X$.

\section{Theoretical Curse of Multilinguality for Embedding Space Structure}
\label{sec:proofs}

\subsection{Defining the curse of multilinguality}
\label{sec:defining_com}

The \com is described in the field for empirical spaces as the degradation of desired properties -- usually, extrinsic task performance for some set of languages -- with an increasing number of encoded languages, given fixed model capacity.
For the \com with respect to embedding space structure (\comemb), ``desired properties'' correspond to perfect multilinguality, and ``capacity'' corresponds to dimensionality.
\textbf{We formulate the \comemb as the inability of a multilingual embedding space to maintain perfect multilinguality given an increasing number of encoded languages $L$ without a prohibitive increase in dimensionality.}

\subsection{Theorem}

In this section, we answer the following question: \emph{how much dimensionality does a perfect multilingual space accommodating $L$ languages theoretically require in terms of $L$?}
If the required dimensionality grows prohibitively with the number of languages, e.g. linearly or exponentially, then this can be termed a theoretical \comemb.

\begin{restatable}[]{thm}{thm:main-thm}
\label{thm:main-thm}
The minimum dimensionality of a perfect multilingual space encoding $L$ languages grows as $\Theta(\log L)$.
\end{restatable}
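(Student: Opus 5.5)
We prove the two directions separately. For the upper bound we give an explicit construction. Concatenate the reference concept space with a small \emph{language-code} block: $x_{c,\ell} = (\alpha z_c,\ \beta u_\ell)$, where $z_c\in\mathbb{R}^D$ is normalized to unit norm and $u_\ell \in \mathbb{R}^k$ are unit vectors forming a spherical code with pairwise cosine at most some constant $\rho<1$ (say $\rho = 1/2$). A standard probabilistic or volumetric (Gilbert--Varshamov type) argument, as in \citet{vershynin2019high}, gives such a code with $L$ points in dimension $k = O(\log L)$, where the constant depends only on $\rho$. Every embedding then has norm $\sqrt{\alpha^2+\beta^2}$, so cosines are affine in inner products:
\[
\cos(x_{c,\ell},x_{d,m}) = \frac{\alpha^2\cos(z_c,z_d)+\beta^2\langle u_\ell,u_m\rangle}{\alpha^2+\beta^2}.
\]

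We then verify the three requirements in order.

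\textbf{Monolingual structure.} Within a fixed $\ell$, the term $\beta^2\langle u_\ell,u_\ell\rangle=\beta^2$ is a constant shift. Hence similarity rankings within $X^L_\ell$ coincide exactly with those of $Z$, which gives \Cref{dff:monolingual-structure}.

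\textbf{Cross-lingual alignment.} We need $\alpha^2 + \beta^2\langle u_\ell,u_m\rangle > \alpha^2\cos(z_c,z_{c'}) + \beta^2\langle u_\ell,u_n\rangle$ for all $c\neq c'$. Since $\cos(z_c,z_{c'})<\gamma$, and the worst case is $\langle u_\ell,u_m\rangle=-1$ against $\langle u_\ell,u_n\rangle = 1$ (the case $n=\ell$), it suffices that $\alpha^2(1-\gamma) > 2\beta^2$. We therefore choose $\beta^2/\alpha^2 < (1-\gamma)/2$. If we also want a tighter condition, we can restrict the code to nonnegative inner products in $[0,\rho]$, which relaxes this to $\beta^2 < \alpha^2(1-\gamma)$.

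\textbf{Non-degeneracy.} For distinct pairs there are two cases. If $c\neq d$, the cosine is at most $(\alpha^2\gamma+\beta^2)/(\alpha^2+\beta^2)<1$. If $c=d$ and $\ell\neq m$, the cosine is at most $(\alpha^2+\beta^2\rho)/(\alpha^2+\beta^2)<1$. Taking the maximum of these two bounds gives $\gamma_X$.

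The total dimension is therefore $D+O(\log L)$. With explicit constants for a $\rho=1/2$ code, this should give the ``about $30$ extra dimensions for 7000 languages'' figure.

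For the lower bound, we fix any single concept $c$ and consider the $L$ unit vectors $x_{c,\ell}/\|x_{c,\ell}\|$. Non-degeneracy forces their pairwise cosines to be below $\gamma_X<1$, so they form a spherical code with minimum angular separation $\theta=\arccos\gamma_X>0$ in $\mathbb{R}^d$. A volumetric packing bound (disjoint caps of angular radius $\theta/2$) gives $L \le (1+2/\delta)^d$, where $\delta$ is the corresponding chordal separation. Hence $d \ge \log L/\log(1+2/\delta) = \Omega(\log L)$.

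The main obstacle is the quantifier on $\gamma_X$. If $\gamma_X$ is allowed to tend to $1$ as $L$ grows, the lower bound collapses, since even a circle holds arbitrarily many distinct points. The proof must therefore treat the resolution scale $\gamma_X$ as fixed, independent of $L$, as in \citet{weller2026on}. Our construction must then also achieve a $\gamma_X$ independent of $L$. This holds here because the code's $\rho$, and hence $\alpha$ and $\beta$, depend only on $\gamma$ and not on $L$. Making this convention explicit, together with tracking constants in the spherical-code existence bound, is the delicate part. Combining the two bounds yields $\Theta(\log L)$.
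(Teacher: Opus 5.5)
Your proof is correct, and its skeleton is the paper's. The paper's embedding $\frac{1}{\sqrt{1+r^2}}(z_c\oplus u_\ell)$ with $\|u_\ell\|=r$ is your $(\alpha z_c,\beta u_\ell)$ with $r=\beta/\alpha$. The affine decomposition, the monolingual-structure and non-degeneracy checks, and the resulting $\gamma_X=\max\{\frac{\gamma+r^2}{1+r^2},\frac{1+r^2\rho}{1+r^2}\}$ all coincide. The lower bound is also the same argument: fix one concept and pack its $L$ variants, with $\gamma_X$ held fixed in $L$ exactly as the paper's footnote stipulates.

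The genuine difference is the weight on the language block, which changes the packing lemma the upper bound needs. The paper works with $r\ge 1$ (concretely $r=1$), so cross-lingual alignment has to come from the offsets being mutually close. It needs $1-\frac{1-\gamma}{r^2}<\cos(u_\ell,u_m)<\rho$, which at $r=1$ reads $\gamma<\cos(u_\ell,u_m)<\rho$. It therefore proves a two-sided, cap-restricted packing lemma: a maximal $\arccos\rho$-separated set inside a cap of angular radius between $\arccos\rho$ and $\arccos(\gamma)/2$, counted via a ratio of cap measures. That lemma requires the side condition $\arccos\rho<\arccos(\gamma)/2$.

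You instead take $\beta^2/\alpha^2<(1-\gamma)/2$, i.e.\ $r^2<(1-\gamma)/2$. This pushes the paper's lower threshold $1-\frac{1-\gamma}{r^2}$ below $-1$, so the constraint becomes one-sided. Any spherical code with maximum cosine a constant $\rho<1$ then works, and $\rho$ is decoupled from $\gamma$. This is the simpler route: your upper bound rests on the same object as the lower bound (an $L$-point spherical code with fixed separation), and $\gamma_X$ remains independent of $L$.

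One caveat concerns your side remark about $30$ dimensions, which the $\Theta(\log L)$ statement does not need. The generic probabilistic or Gilbert--Varshamov bound at $\rho=1/2$ does not put $7000$ points in $\mathbb{R}^{30}$; the union bound over $\binom{7000}{2}$ pairs fails by several orders of magnitude. In your regime you could instead let $\rho$ be close to $1$, for example the paper's random code in $\mathbb{R}^{29}$ with $|\langle y_i,y_j\rangle|\le 9/11$, or use an explicit lattice kissing configuration.
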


We prove this in two parts.
First, we upper-bound the minimum \emph{sufficient} dimensionality of a perfect multilingual space in terms of the number of encoded languages and the dimensionality of a fixed target concept space, in line with our setup in \Cref{sec:defining_conditions}.

\begin{restatable}[]{subthm}{thm:main-sufficient-dim}
\label{thm:main-sufficient-dim}
Let $D_Z$ be the dimensionality of the reference concept space $Z$.
Then there exists a perfect multilingual space $X^L$ encoding $L$ languages with minimum sufficient dimensionality $D_{X^L}=D_Z + O(\log L)$.
\end{restatable}

Next, we lower-bound the dimensionality of the space, i.e. show the \emph{necessary} dimensionality of a perfect multilingual space.
\begin{restatable}[]{subthm}{lmm:main-necessary-dim}
\label{thm:main-necessary-dim}
Any perfect multilingual space $X^L$ encoding $L$ languages must have dimensionality $D_{X^L}=\Omega(\log L)$.
\end{restatable}

Since $D_Z$ is independent of $L$, \Cref{thm:main-sufficient-dim} and \Cref{thm:main-necessary-dim} together establish \Cref{thm:main-thm}.

The following proofs make a fixed unit norm assumption for $Z$ and $X^L$. 
This is standard in the use of text embeddings for applications such as retrieval \citep{weller2026on}. 
Our proofs are also extendable to the case of variable norm as sketched in \Cref{app:norms}.

\subsection{Proof of \Cref{thm:main-sufficient-dim}}
\label{sec:proof_suff}
We proceed constructively.
Given $Z$ and $L$, we describe a multilingual space $X^L$ and show that (a) it maintains the multilinguality conditions, and that (b) its dimensionality follows the required bound.

\paragraph{Multilingual space construction}

Recall that for a concept $c$, we have its embedding $z_c$ in the concept space $Z$, with the resolution scale $\gamma$, and dimensionality $D_Z$.

We now define a space $U$ of \emph{language offsets}.
This space uses $D_L$ new dimensions, and is populated with $L$ language offset vectors $u_{\ell} \in \mathbb{R}^{D_L}$, with a fixed norm $ \|u_\ell\| = r$, and with a maximum similarity $\rho<1$ between any two language offsets. 
That is, we have $\cos(u_\ell,u_m) < \rho$ for all $\ell,m \in \mathcal{L}$ where $\ell\neq m$.

We then define our multilingual space $X^L$, by a direct sum construction of the concept space and the language offsets.
That is, $\forall c \in \mathcal{C}, \ell\in \mathcal{L},  x_{c,\ell}:= \frac{1}{\sqrt{1+r^2}} z_c \oplus u_\ell$, where $\oplus$ refers to vector concatenation.
As a result, $x_{c,\ell} \in X^L$ has dimensionality $D_{X^L} = D_Z + D_L$.
Note the norm term $\frac{1}{\sqrt{1+r^2}}$. 
Since $\|z_c\|=1$ and $\|u_\ell\|=r$, and $z_c$ and $u_\ell$ are orthogonal when embedded in $X^L$, we now have $X^L$ as a unit norm space with $\|x_{c,\ell}\|=1$ for all $c,\ell$ and dimensionality $D_{X^L}$.

\paragraph{Similarity bounds on offset vectors}
We construct the language offset vectors in the $D_L$-dimensional space $U$ with fixed norm $r$ such that their minimum and maximum cosine similarity lies in the following interval, guaranteed to be non-empty by a choosing $r \ge 1$ and a suitable choice of $\rho$:
\begin{equation}
\label{eq:language-offset-two-sided-cosine}
1-\frac{1-\gamma}{r^2} 
<
\cos(u_\ell,u_m)
<
\rho < 1
\qquad \text{for all } \ell\neq m.
\end{equation}

\paragraph{Similarity decomposition} 
Given that $X^L$ and $Z$ are unit-norm, we have the following relationship: $\cos(x_{c,\ell},x_{d, m}) = \langle x_{c,\ell},x_{d,m} \rangle$; analogously for $Z$.
Since $z_c$ and $u_\ell$ are orthogonal when embedded in $X^L$, we make the following useful inner-product decomposition for points in $X^L$, used throughout our proof:
\begin{align}
\langle x_{c,\ell},x_{d,m}\rangle
&=
\left\langle
\frac{z_c\oplus u_\ell}{\sqrt{1+r^2}},
\frac{z_{d}\oplus u_m}{\sqrt{1+r^2}}
\right\rangle =
\frac{1}{1+r^2}
\left\langle
z_c\oplus u_\ell,
z_{d}\oplus u_m
\right\rangle 
=
\frac{\langle z_c,z_{d}\rangle+\langle u_\ell,u_m\rangle}{1+r^2}
\label{eq:distance-decomposition}
\end{align}

\begin{restatable}[]{lmm}{lmm:X-monolingual-structure}
\label{lmm:X-monolingual-structure}
The space $X^L$ satisfies the monolingual structure condition (\Cref{dff:monolingual-structure}).
\end{restatable}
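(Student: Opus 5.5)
The plan is to fix a single language $\ell \in \mathcal{L}$ and work inside its monolingual subspace $X^L_\ell = \{x_{c,\ell} : c \in \mathcal{C}\}$. The goal is to show that similarities within $X^L_\ell$ are a strictly increasing affine function of similarities in $Z$; once that is established, the ranking equivalence of \Cref{dff:monolingual-structure} follows immediately.

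\textbf{Step 1: reduce to inner products.} Since both $Z$ and $X^L$ are unit-norm, cosine similarity equals inner product in each space. It therefore suffices to compare $\langle x_{c,\ell}, x_{c',\ell}\rangle$ with $\langle z_c, z_{c'}\rangle$.

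\textbf{Step 2: apply the decomposition.} Specialize \eqref{eq:distance-decomposition} to the case $m=\ell$. Because $\langle u_\ell, u_\ell\rangle = \|u_\ell\|^2 = r^2$, this gives
\[
\cos(x_{c,\ell},x_{c',\ell}) = \frac{\cos(z_c,z_{c'}) + r^2}{1+r^2}
\]
for all $c, c' \in \mathcal{C}$, including the case $c=c'$, where both sides equal $1$. The right-hand side is $f(t) = (t+r^2)/(1+r^2)$ evaluated at $t=\cos(z_c,z_{c'})$. The map $f$ is strictly increasing because its slope $1/(1+r^2)$ is positive.

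\textbf{Step 3: conclude the equivalence.} Take arbitrary $c,c',d,d' \in \mathcal{C}$ and set $t_1 = \cos(z_c,z_{c'})$ and $t_2 = \cos(z_d,z_{d'})$. Strict monotonicity of $f$ gives $t_1 \le t_2 \iff f(t_1) \le f(t_2)$. By Step 2, this is exactly the biconditional required in \Cref{dff:monolingual-structure}. Since $\ell$ was arbitrary, the condition holds for every language.

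There is no real obstacle here. The only point needing care is that the offset term contributes the same constant $r^2$ to every within-language pair, because all pairs share the same $u_\ell$ and $\|u_\ell\|=r$ holds exactly. Consequently this lemma does not use the two-sided bound \eqref{eq:language-offset-two-sided-cosine}, which is presumably needed only for cross-lingual alignment and non-degeneracy.
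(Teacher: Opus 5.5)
Your proposal is correct and follows the paper's proof: both specialize the decomposition in \Cref{eq:distance-decomposition} to $m=\ell$, use $\langle u_\ell,u_\ell\rangle=r^2$ to get $\cos(x_{c,\ell},x_{c',\ell}) = \frac{\cos(z_c,z_{c'})+r^2}{1+r^2}$, and conclude that the similarity rankings agree. Your version also states explicitly that the slope $1/(1+r^2)$ is positive, which is what makes the map order-preserving; the paper's phrase ``affine behavior'' leaves this implicit.
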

\vspace{-5mm}
\begin{proof}
We will show that for every language $\ell$, $X^L_\ell \sim_R Z$, i.e. each language subspace is identical to $Z$ under point similarity ranking.
Using \Cref{eq:distance-decomposition} for points in $X^L_\ell$, we have
\[
\begin{aligned}
\cos(x_{c,\ell},x_{c',\ell}) 
&= \langle x_{c,\ell},x_{c',\ell}\rangle 
= \frac{\langle z_c,z_{c'}\rangle+\langle u_\ell,u_\ell\rangle}{1+r^2} 
= \frac{\langle z_c,z_{c'}\rangle+r^2}{1+r^2} 
= \frac{\cos(z_c,z_{c'})+r^2}{1+r^2} \\
&= a\cos(z_c,z_{c'}) +b 
\end{aligned}
\]
for fixed and global $a$ and $b$ depending only $r$. 
Note that this automatically gives us, for any two languages $\ell,m$,
\[
\begin{aligned}
\cos(x_{c,\ell},x_{c',\ell})
&= a \cdot \cos(z_c,z_{c'}) + b 
= \cos(x_{c,m},x_{c',m})
\end{aligned}
\]

Given this affine behavior, and the language agnosticity of the result, it follows that the similarity ranking behavior is identical in all languages $\ell \in \mathcal{L}$ with respect to $Z$, i.e. $\forall \ell \in \mathcal{L}, X^L_\ell \sim_R Z$. Hence, $X^L$ satisfies the monolingual structure condition. 
\end{proof}

\begin{restatable}[]{lmm}{lmm:X-crosslingual-structure}
\label{lmm:X-crosslingual-structure}
The space $X^L$ satisfies the cross-lingual alignment condition (\Cref{dff:cross-lingual-alignment}).
\end{restatable}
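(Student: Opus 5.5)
We need to show that for all $c\neq c'$ and all languages $\ell,m,n$, $\cos(x_{c,\ell},x_{c,m}) > \cos(x_{c,\ell},x_{c',n})$. The plan is to reduce both sides to expressions in $Z$ and $U$ via \Cref{eq:distance-decomposition}, then compare a lower bound on the left-hand side with an upper bound on the right-hand side. The resolution scale $\gamma$ of $Z$ and the lower bound on offset similarities in \Cref{eq:language-offset-two-sided-cosine} supply exactly these bounds.

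\textbf{Left-hand side.} By \Cref{eq:distance-decomposition},
\[
\langle x_{c,\ell},x_{c,m}\rangle = \frac{1+\langle u_\ell,u_m\rangle}{1+r^2}.
\]
If $\ell=m$, this equals $1$. If $\ell\neq m$, then $\langle u_\ell,u_m\rangle = r^2\cos(u_\ell,u_m)$, and the left inequality of \Cref{eq:language-offset-two-sided-cosine} gives
\[
\langle u_\ell,u_m\rangle > r^2-(1-\gamma).
\]
Hence the left-hand side is strictly greater than $\frac{r^2+\gamma}{1+r^2}$ in every case.

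\textbf{Right-hand side.} Since $c\neq c'$, the resolution scale of $Z$ gives $\langle z_c,z_{c'}\rangle<\gamma$. Cauchy--Schwarz with $\|u_\ell\|=\|u_n\|=r$ gives $\langle u_\ell,u_n\rangle\le r^2$, with equality when $n=\ell$. So
\[
\langle x_{c,\ell},x_{c',n}\rangle < \frac{\gamma+r^2}{1+r^2}.
\]

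Chaining the two bounds yields the strict inequality required by \Cref{dff:cross-lingual-alignment}.

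\textbf{Main obstacle.} The delicate case is $\ell\neq m$ with $n=\ell$. Here the competitor is a different concept in the same language, and its offset term is the maximal value $r^2$. The margin then comes entirely from the gap between $1$ and $\gamma$ in the concept term, and it must outweigh the offset deficit $r^2(1-\cos(u_\ell,u_m))$. That deficit is below $1-\gamma$ precisely because of the calibrated lower bound $1-\frac{1-\gamma}{r^2}$ in \Cref{eq:language-offset-two-sided-cosine}.

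The proof will therefore treat the cases $n=\ell$ and $n\neq\ell$ uniformly via the Cauchy--Schwarz bound, noting that the $n\neq\ell$ case is only easier. Strictness is preserved because both the $Z$ bound and the offset bound are strict. The upper bound $\rho$ is not needed for this lemma; it matters only for non-degeneracy, which requires distinct languages' copies of a concept to be separated.
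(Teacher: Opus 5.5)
Your proposal is correct and follows the paper's argument. Like the paper, you use the decomposition to bound $\cos(x_{c,\ell},x_{c',n})$ above by $\frac{\gamma+r^2}{1+r^2}$, and you reduce the comparison to the constructed lower bound $\cos(u_\ell,u_m) > 1-\frac{1-\gamma}{r^2}$. You also fill in two points the paper leaves implicit: the case $\ell=m$, where the left side equals $1$, and the Cauchy--Schwarz step $\langle u_\ell,u_n\rangle\le r^2$.
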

\vspace*{-5mm}
\begin{proof}
We show that for every $c, c' \in \mathcal{C}$ where $c\neq c'$ and all languages $\ell,m,n \in \mathcal{L}$,
$\cos(x_{c,\ell},x_{c,m}) > \cos(x_{c,\ell},x_{c',n})$.
Recall that points in $Z$ have resolution scale $\gamma$ and points in $U$ have norm $r$.
From the similarity decomposition (\Cref{eq:distance-decomposition}), we can simplify and bound the two similarity expressions as follows for any $c \neq c', \ell, m, n$:

\[
\begin{aligned}
\cos(x_{c,\ell},x_{c,m})
&= \frac{\langle z_c,z_c\rangle+\langle u_\ell,u_m\rangle}{1+r^2} 
= \frac{1+\langle u_\ell,u_m\rangle}{1+r^2} \\
\cos(x_{c,\ell},x_{c',n})
&= \frac{\langle z_c,z_{c'}\rangle+\langle u_\ell,u_n\rangle}{1+r^2} 
< \frac{\gamma+r^2}{1+r^2}.
\end{aligned}
\]

Now it suffices to show that
\[
1+\langle u_\ell,u_m\rangle > \gamma+r^2.
\]

Since $\langle u_\ell,u_m\rangle=r^2\cos(u_\ell,u_m)$, the above expression is equivalent to
\[
\cos(u_\ell,u_m) > 1-\frac{1-\gamma}{r^2}.
\]
which is guaranteed by our construction in \Cref{eq:language-offset-two-sided-cosine}.
Thus, $X^L$ satisfies the cross-lingual alignment condition.
\end{proof}

\begin{restatable}[]{lmm}{lmm:X-non-degenerate}
\label{lmm:X-non-degenerate}
The space $X^L$ is non-denegerate.
\end{restatable}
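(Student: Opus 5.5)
We need to exhibit a resolution scale $\gamma_X<1$ such that $\cos(x_{c,\ell},x_{d,m})<\gamma_X$ for every pair of distinct index pairs $(c,\ell)\neq(d,m)$ in $\mathcal{C}\times\mathcal{L}$. The plan is to use the similarity decomposition (\Cref{eq:distance-decomposition}) throughout. A pair of distinct points falls into exactly one of three cases:
\begin{itemize}
\item[(i)] same language, different concepts: $\ell=m$, $c\neq d$;
\item[(ii)] same concept, different languages: $c=d$, $\ell\neq m$;
\item[(iii)] different concepts and different languages: $c\neq d$, $\ell\neq m$.
\end{itemize}
We bound each case separately and take $\gamma_X$ to be the largest of the three bounds.

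In case (i), the computation in the proof of \Cref{lmm:X-monolingual-structure} gives $\cos(x_{c,\ell},x_{d,\ell})=\frac{\cos(z_c,z_d)+r^2}{1+r^2}$. The resolution scale of $Z$ then yields the strict bound $\frac{\gamma+r^2}{1+r^2}$.

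In case (ii), the decomposition gives $\frac{1+r^2\cos(u_\ell,u_m)}{1+r^2}$. By \Cref{eq:language-offset-two-sided-cosine}, this is strictly less than $\frac{1+r^2\rho}{1+r^2}$.

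In case (iii), both terms are bounded at once, giving a bound of $\frac{\gamma+r^2\rho}{1+r^2}$. This is dominated by the other two bounds whenever $\gamma\le 1$ and $\rho\le 1$.

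We therefore set
\[
\gamma_X:=\max\left\{\frac{\gamma+r^2}{1+r^2},\ \frac{1+r^2\rho}{1+r^2}\right\}.
\]
Since $\gamma<1$ and $\rho<1$, each numerator is strictly less than the denominator $1+r^2$, so $\gamma_X<1$.

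The one point needing care is strictness. Each per-case inequality must be strict, and $\gamma_X$ must be a fixed constant that does not depend on the particular pair. Strictness comes directly from the strict inequalities in the definitions of $\gamma$ and $\rho$. The constant $\gamma_X$ depends only on $\gamma$, $r$ and $\rho$, none of which depend on the chosen points. If the concept set $\mathcal{C}$ were infinite, strict bounds could accumulate toward a supremum, but this is harmless here because the definition of non-degeneracy asks only for strict inequality against the fixed constant $\gamma_X$, and that is exactly what we obtain.

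We expect no real obstacle. The main thing is to make sure the case analysis is exhaustive, which it is, since distinct index pairs differ in the concept, the language, or both.
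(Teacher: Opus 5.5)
Your proposal is correct and follows the paper's proof essentially verbatim. It uses the same three-way case split via \Cref{eq:distance-decomposition}, the same strict per-case bounds, and the same choice $\gamma_X=\max\{\frac{\gamma+r^2}{1+r^2},\frac{1+r^2\rho}{1+r^2}\}<1$. The paper also notes explicitly that $\gamma_X$ is independent of $L$, since $\gamma$, $\rho$ and $r$ are fixed constants of the construction; your remark that $\gamma_X$ depends only on $\gamma,r,\rho$ already covers this.
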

\vspace{-5mm}
\begin{proof}
The cosine similarity between any two distinct points in $X^L$ is given by \Cref{eq:distance-decomposition}.
We bound the maximum similarity between any two points based on the three cases: same concept with different language, different concept with same language, and different concept with different language.
If $c=c'$ and $\ell\neq m$, then
$\cos(x_{c,\ell},x_{c,m}) < \frac{1+r^2\rho}{1+r^2}$.
If $c\neq c'$ and $\ell=m$, then
$\cos(x_{c,\ell},x_{c',\ell}) < \frac{\gamma+r^2}{1+r^2}$.
If both $c\neq c'$ and $\ell\neq m$, then
\[
\begin{aligned}
\cos(x_{c,\ell},x_{c',m})
&=
\frac{\langle z_c,z_{c'}\rangle+\langle u_\ell,u_m\rangle}{1+r^2}
<
\frac{\gamma+r^2\rho}{1+r^2} 
<
\gamma_{X}:=
\max\left\{
\frac{\gamma+r^2}{1+r^2},
\frac{1+r^2\rho}{1+r^2}
\right\}.
\end{aligned}
\]

Note that $\gamma_{X}<1$, since $\gamma<1$ and $\rho<1$.
$\gamma_{X}$ is independent of $L$ and only depends on the constants $\gamma$, $\rho$ and $r$; thus, $X^L$ maintains a fixed cosine scale independent of the number of languages encoded.
Thus, $X^L$ satisfies the non-degeneracy condition at fixed resolution scale $\gamma_X <1$.
\end{proof}

\begin{restatable}[]{lmm}{lmm:X-log-growth-sufficient}
\label{lmm:X-log-growth-sufficient}
The minimum sufficient dimensionality of $X^L$ is $O(\log L)$.
\end{restatable}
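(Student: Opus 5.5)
We need to show that $L$ unit vectors in $\mathbb{R}^{D_L}$ can have all pairwise cosines in the open interval $(\alpha,\rho)$ with $D_L = O(\log L)$, where $\alpha := 1-\frac{1-\gamma}{r^2}$ as in \Cref{eq:language-offset-two-sided-cosine}. Scaling by $r$ then gives the offsets $u_\ell$. By \Cref{lmm:X-monolingual-structure}, \Cref{lmm:X-crosslingual-structure} and \Cref{lmm:X-non-degenerate}, this is all that is needed for $X^L$ to be a perfect multilingual space with $D_{X^L}=D_Z+D_L$.

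\textbf{Step 1: fix the constants independently of $L$.} Choose $r\ge 1$ with $r^2 > 1-\gamma$, so that $\alpha\in(0,1)$. Then pick $\rho\in(\alpha,1)$, say $\rho=(1+\alpha)/2$. Set $\mu := (\alpha+\rho)/2$ and $\epsilon := (\rho-\alpha)/2$. Both are positive constants depending only on $\gamma$, and the target interval is $(\mu-\epsilon,\mu+\epsilon)$.

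\textbf{Step 2: reduce to near-orthogonal vectors.} Write each offset direction as
\[
\hat u_\ell = \sqrt{\mu}\, e_0 \oplus \sqrt{1-\mu}\, v_\ell,
\]
where $e_0$ is a shared unit direction and the $v_\ell\in\mathbb{R}^{D_L-1}$ are unit vectors. Then $\hat u_\ell$ is a unit vector and
\[
\cos(\hat u_\ell,\hat u_m)=\mu+(1-\mu)\langle v_\ell,v_m\rangle .
\]
It therefore suffices to have $|\langle v_\ell,v_m\rangle|<\epsilon':=\epsilon/(1-\mu)$ for all $\ell\neq m$. Here $\epsilon'$ is again a constant independent of $L$.

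\textbf{Step 3: near-orthogonal vectors in logarithmic dimension.} This is the main step. We show that $L$ unit vectors with pairwise $|\langle v_\ell,v_m\rangle|<\epsilon'$ exist in dimension $k=O(\epsilon'^{-2}\log L)$, by the probabilistic method. Draw the $v_\ell$ independently and uniformly from the sphere $S^{k-1}$, or take normalized random sign vectors in $\{\pm 1/\sqrt{k}\}^k$. For a fixed pair, a Hoeffding or concentration-of-measure bound gives
\[
\Pr\big(|\langle v_\ell,v_m\rangle|\ge \epsilon'\big)\le 2e^{-k\epsilon'^2/2}.
\]
A union bound over the $\binom{L}{2}$ pairs gives a total failure probability below $L^2 e^{-k\epsilon'^2/2}$. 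This is less than $1$ once $k> 4\log L/\epsilon'^2$. Hence such a configuration exists. Equivalently, this is a Johnson--Lindenstrauss-type statement.

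\textbf{Step 4: conclude.} The construction gives $D_L=k+1=O(\log L)$, with the hidden constant depending only on $\gamma$ through $r$, $\rho$ and $\epsilon'$. Therefore $D_{X^L}=D_Z+O(\log L)$.

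The only delicate point is making sure the constants $r$ and $\rho$ are chosen before $L$ and never depend on it, so that the $O(\cdot)$ is uniform. For a concrete number such as "30 extra dimensions for 7000 languages", one would instead track the explicit constants in the concentration bound.
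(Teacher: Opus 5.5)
Your argument is correct, but it is not the route the paper takes for this lemma.

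\textbf{The paper's route.} In \Cref{app:packing_sufficient} the paper fixes $r=1$ and imposes the extra condition $\arccos\rho<\tfrac12\arccos\gamma$. It confines all offsets to one spherical cap of angular radius $\beta\in(\arccos\rho,\tfrac12\arccos\gamma)$, so the cap's diameter enforces the minimum-similarity constraint. It then takes a maximal $\arccos\rho$-separated subset of that cap. Maximality gives a covering, hence $M\ge\sigma(\mathcal C_\beta)/\sigma(\mathcal C_{\arccos\rho})$. A cited cap-measure asymptotic makes this ratio grow like $(\sin\beta/\sin(\arccos\rho))^{d-1}$.

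\textbf{Your route.} You decouple the two constraints. The shared component $\sqrt{\mu}\,e_0$ centres every cosine at $\mu$. Near-orthogonality of the residuals, obtained from Hoeffding (or Ball's cap bound) plus a union bound, controls the width of the window.

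\textbf{What your route buys.}
\begin{itemize}
\item There is no half-angle restriction on $\rho$: any $\rho\in(\alpha,1)$ works.
\item The argument is self-contained and elementary, with explicit constants. With your choice $\rho=(1+\alpha)/2$ one gets $\epsilon'=1/3$ exactly, so $D_L\le 36\ln L+2$. That constant does not depend on $\gamma$ at all; only the resulting resolution $\gamma_X$ does.
\item The same mechanism also delivers the concrete numbers. It is precisely the construction the paper uses in \Cref{app:constants} for 7000 languages in 30 dimensions. There $\mu=0.945$, and the exact Beta tail replaces Hoeffding, which is what makes the constant small.
\end{itemize}

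The paper's covering argument is deterministic and purely geometric. However, it relies on an external asymptotic for cap measures whose hidden constants depend on the angles.
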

\vspace{-5mm}
\begin{proof}
Recall that $D_{X^L} = D_Z+D_L$.
We will bound $D_L$, the dimensionality of the language offset space $U$.

$U$ is constructed by placing $L$ fixed-norm language offsets in $U$ such that each point pair has cosine similarity in the non-empty interval defined in \Cref{eq:language-offset-two-sided-cosine}.
Bounding $D_L$ therefore reduces to the following question: how many dimensions $D_L$ are sufficient to pack $L$ points on a sphere in a $D_L$-dimensional with specified maximum and minimum cosine similarity? 

Using standard spherical cap packing bounds, we can show that the number of points that can be placed on a sphere in a $D_L$-dimensional space following these constraints grows exponentially in $D_L$.\footnote{This comes with the constraint on a choice of $\rho$. For $r=1$, this is $\arccos(\rho) < \arccos(\gamma)/2$.}
This is shown in detail in \Cref{app:packing_sufficient}.
Briefly, the maximum similarity constraint imposes an angular region within which all points must lie, and for a maximal packing, caps with angular radius equal to the minimum angular separation centered at the packed points must cover this containing region.
This allows us to lower-bound the number of points in a maximal packing by a ratio of spherical cap measures, which grows exponentially with the dimensionality of the sphere.
Therefore, to represent $L$ language offsets, it is possible to choose some constant $C>0$ independent of $L$ such that a sphere in a $D_L$-dimensional space with $D_L=\left\lceil \frac{1}{C}\log L\right\rceil$ fits $\exp(CD_L) \ge L$ such points.
Therefore, the minimum sufficient dimension to pack $L$ points maintaining the required constraints is $D_L=O(\log L)$. 
Since $D_Z$ is constant in terms of $L$, $X^L$ has the minimum sufficient dimension
\[
\begin{aligned}
D_{X^L}=D_Z+D_L =D_Z+O(\log L) \implies  D_{X^L} = O(\log L)
\end{aligned}
\]

Note that it suffices to exhibit a single perfect multilingual space with the above scaling to upper-bound the minimum sufficient dimensionality for perfect multilinguality.
By the existence of $X^L$, we establish  the minimum sufficient dimension for perfect multilinguality grows as $O(\log L)$.
\end{proof}

\paragraph{Fitting 7000 languages in 30 dimensions}
For practical purposes, the constant involved in the $O(\log L)$ bound may be of concern. 
In \Cref{app:constants}, we prove that it is theoretically possible to fit $L=7000$ language offset unit vectors into $D_L = 30$  dimensions while satisfying all required constraints on minimum and maximum similarity required for our constructive solution, with a realistic choice of $\gamma$.
Briefly, we show that a random construction of $L$ points in a $D_L$-dimensional space has a non-zero probability of satisfying our required constraints, proving existence.
Thus, our theoretical scaling factor is of minor practical concern given the typical dimensionality scale of representation spaces.

\subsection{Proof of \Cref{thm:main-necessary-dim}}
\label{sec:proof_nec}

\begin{proof}
Fix any concept $c$. Then the $L$ language variants of this concept, $x_{c,1},\ldots,x_{c,L}$,
are all distinct fixed-norm vectors, or distinct points on a sphere. 
Since $X^L$ is non-degenerate as we assume it is perfectly multilingual, any two distinct points in it must remain separated at some fixed cosine resolution scale $\gamma_X<1$, equivalent to a fixed angular separation $\theta_X := \arccos(\gamma_X)$, independent of $L$.\footnote{Fixing the resolution scale across the family of spaces $X^L$ allows us to study the asymptotic behaviour of the necessary dimensionality of $X^L$ with increasing $L$ in a meaningful manner, and follows the assumptions in related work studying dimensionality scaling \citep{weller2026on,okajima2026limits}.
}
Thus, $X^L$ contains at least $L$ points on a multidimensional sphere with a fixed angular separation, and its necessary dimensionality is lower bounded by the number of dimensions needed to place points in such a way.

We now apply a standard spherical cap bound. 
For any fixed angular separation $\theta_X>0$, the maximum number of vectors that can be placed on a $D_{X_L}$-dimensional sphere in with pairwise angular separation at least $\theta_X$ grows at most exponentially in $D_{X_L}$.
We show this in detail in \Cref{app:packing_necessary}.
Briefly, the minimum angular separation means that if we place a spherical cap with half the required angular separation around each point then these caps must be disjoint. 
Their total measure cannot exceed that of the sphere.
This upper-bounds the number of points by the reciprocal of a spherical cap measure, which grows at most exponentially with the dimensionality of the sphere.
That is, there exists a constant $A_{\theta_X}>0$, depending only on $\theta_X$, such that
\[
\begin{aligned}
L \le \exp(A_{\theta_X} D_{X_L}) &\Rightarrow \log L \le A_{\theta_X} D_{X_L}   
\Rightarrow D_{X_L} \ge \frac{1}{A_{\theta_X}}\log L 
&\Rightarrow D_{X_L}=\Omega(\log L)
\end{aligned}
\]
Thus, the dimensionality of any non-degenerate multilingual space encoding $L$ languages has a logarithmic dependence on $L$.
Note that if the original monolingual concept space ``fully'' requires $D_Z$ dimensions to represent concept semantics, then these $\Omega(\log L)$ dimensions required to construct language directions are additional to these $D_Z$ dimensions.
Under this minimality assumption on the base concept space, the additional cost can be understood as a multilinguality tax:\footnote{In general, real embedding spaces may not use their ambient dimensionality \citep{ansuini2019intrinsic}. Instead, this can be understood as placing pressure on true or effective dimensionality of such a space.}
\[
D_{X^L} = D_Z+\Omega(\log L)
\]
In general, the dimension $D_{X^L}$ of any perfect multilingual space $X^L$ encoding $L$ languages must grow at least as $\Omega(\log L)$.
\end{proof}

\section{Empirical curse of multilinguality for embedding spaces}
\label{sec:empirical_com}

While our theorem shows that a multilingual space is theoretically capable of exhibiting perfect multilinguality under language scaling without a prohibitive cost, there may still exist an empirical \comemb for real-world multilingual models.
In this section, we formulate metrics for monolingual structure and cross-lingual alignment,  and provide a small-scale controlled study showing the empirical curse of multilinguality for embedding space structure under eight configurations mimicking various real-world conditions.
Specifically, we train multilingual models from scratch on an increasing number of languages and evaluate embedding space structure under language scaling, in different training and evaluation configurations.\footnote{Our characterization of the theoretical curse 
of multilinguality for embedding space structure 
allows for a small growth based on number of languages, while in our experiments, we keep the dimensionality fixed with language scale for simplicity.
In practice, trained embedding spaces use very little of the ambient dimensionality \citep{ethayarajh-2019-contextual,ansuini2019intrinsic}, 
and the permissible numerical inflation for around a hundred languages is small as discussed in \Cref{app:constants}.}
We provide precise metric definitions and extra experimental details and results in \Cref{app:empirical_com}.

\paragraph{Note on previous work}
Previous works studying the \com and related concepts generally differ with respect to their experimental conditions and assumptions.
Differences include (i) whether token compute is fixed \citep{shaham-etal-2023-causes} or increases with more training languages \citep{aharoni-etal-2019-massivel}, (ii) whether the training data distribution is uniform over languages \citep{shaham-etal-2023-causes} or follows a real-world distribution \citep{aharoni-etal-2019-massivel,conneau-etal-2020-unsupervised}, and (iii) whether performance metrics are tracked over all training languages \citep{conneau-etal-2020-unsupervised} or a fixed group such as high-resource languages \citep{arivazhagan-etal-2019-massively}, among other variables.

\paragraph{Configurations} 
We study the empirical \comemb under eight configurations consisting of all combinations of the three above dimensions: (i) \texttt{\{fixed-compute, increasing-compute\}}, (ii) \texttt{\{uniform-sampling, realistic-sampling\}}, and (iii) \texttt{\{aggregate-all, aggregate-fixed\}}.

\paragraph{Metrics}

Our metrics are inspired from the multilinguality conditions.
We consider a ``concept'' to be a text, and compute the following using a fixed concept set across all target languages obtained from multiway parallel data.
We evaluate \textbf{monolingual structure} against a strong reference space, by measuring, for each concept, the overlap of the monolingual neighbourhood of that concept in each target language with the gold neighbourhood of that concept in a strong monolingual reference space. 
We report this metric (\emph{k-nearest-neighbour overlap; \texttt{MS-NNO}}) as a percentage of $k=20$, averaged over concepts and languages.
We also look at language mean over \emph{token-normalized masked language modeling loss} on a monolingual test set per language (\texttt{MS-MLM)} as an indirect measure of monolingual structure quality.
We evaluate \textbf{cross-lingual alignment} by checking for each language $\ell_1$, whether, for each concept and each language $\ell_2$, the $\ell_2$ equivalent of that concept is closer to the $\ell_1$ equivalent than other $\ell_2$ concepts (weak view; similar trends for strong view shown in \Cref{app:additional_results}). 
This metric (\texttt{CLA-WV}) is reported as a percentage of the concept set, averaged over language pairs. 
We also check for non-degeneracy by checking for looking at the minimum, maximum, and mean cosine similarity over all point pairs in the space as a function of increasing language coverage.
We do not see evidence of degeneracy in any configuration (\Cref{app:additional_results}).

\paragraph{Experimental setup}

We study a 4-layer Transformer-based \texttt{BERT} encoder architecture with a masked-language modeling training objective \citep{devlin2019bert}.
We use the \texttt{MADLAD-400} corpus \citep{kudugunta2023madlad} for monolingual training data, and \texttt{FLORES+} \citep{goyal-etal-2022-flores,nllb-24}, \texttt{BOUQuET} \citep{andrews-etal-2025-bouquet,omnilingual2026}, and \texttt{WMT24++} \citep{deutsch-2025-wmt} as multiway parallel data for evaluation.
We use the English concept space from \texttt{e5-large-v2} \citep{wang2022text} as a strong reference space for \texttt{MS-NNO}.
We create nine language groups with incrementally more languages added in descending order of resourcedness,\footnote{This is realistic in terms of how language coverage is expanded in the real world; we are also driven by data requirement constraints since initial smaller language groups require more data per language for some configurations.} with the largest language group containing 100 languages.
The token count per language depends on the token-compute and sampling settings: \texttt{fixed-compute} experiments are trained on a total budget of 500M tokens, and \texttt{increasing-compute} with \texttt{uniform-sampling} uses 50M tokens per language.
The \texttt{realistic-sampling} setting uses 500M as the maximum language token count, with other language token count proportionate to resource level in our training corpus.
Each model run consists of 5 seeds, with each seed sampling different data up to the budget, and random initialization of model.
The tokenizer is trained on a subset of the training corpus.
For the \texttt{aggregate-all} setting, we evaluate all training languages covered by each evaluation dataset.
For \texttt{aggregate-fixed}, the smallest language group is 10, and we evaluate these 10 fixed languages across all language groups.
We use the \texttt{[CLS]} token embedding as a text embedding for our trained models.

\paragraph{Results and discussion}

\begin{figure*}[t]
    \centering
    \includegraphics[scale=0.75]{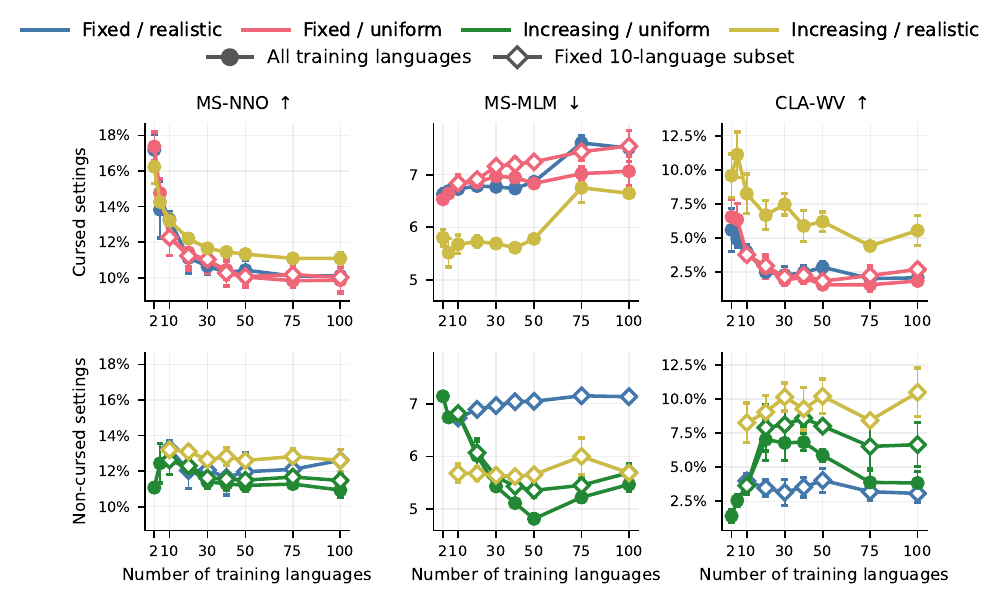}
    \caption{\texttt{MS-NNO}, \texttt{MS-MLM}, and \texttt{CLA-WV} on \texttt{BOUQuET} for encoder models with as the number of training languages increases, in eight configurations. The top and bottom rows show cursed and non-cursed configurations respectively. Arrows indicate whether higher or lower metric values are better. Error bars show standard deviation over five model seeds.}
    \label{fig:main-results-bouquet}
\end{figure*}

If all metrics exhibit deterioration for a configuration, we term it a ``cursed'' configuration.
\Cref{fig:main-results-bouquet} shows results for all eight configurations with \texttt{BOUQuET}, separating cursed from non-cursed configurations  (similar trends for other evaluation datasets in \Cref{app:additional_results})
We observe four cursed configurations of eight, showing consistent degradation of monolingual structure and cross-lingual alignment. 
\textbf{Therefore, we observe an empirical curse of multilinguality for embedding space structure, parallelling previous work in the \com.}

In the other four non-cursed configurations, we observe stability or improvements in one or more metrics.
We find that these are the configurations that generally maintain high token count for the target languages of interest as a consequence of compute and sampling choices.
For example, with \texttt{increasing-compute}, \texttt{realistic-sampling}, and \texttt{aggregate-fixed} (\tikz[baseline=-0.5ex]{
  \draw[yellow!80!orange, line width=1.8pt] (0,0) -- (1.2em,0);
  \draw[yellow!80!orange, fill=white, line width=1.2pt]
    (0.6em,0.25em) -- (0.88em,0) -- (0.6em,-0.25em) -- (0.32em,0) -- cycle;
}), token count for the 10 high-resource target languages remains high and unchanged with more training languages, and metrics stay roughly stable. 
We also see that metrics may actually show some improvements with more languages up to a point, e.g. with \texttt{increasing-compute} and \texttt{uniform-sampling} (\raisebox{0.4ex}{\textcolor{green!60!black}{\rule{1em}{1.2pt}}}), which indicates that total token count and cross-lingual transfer may also play a role. 
\textbf{In general, our findings suggest that the empirical curse of multilinguality for embedding space structure is sensitive to training and evaluation configurations and is mitigated in favourable conditions.}

\paragraph{Recommendation regarding empirical studies}
The above finding regarding the \comemb parallels the fact that various previous empirical studies of the extrinsic \com differ in their characterization of the phenomenon. 
These often operate in different configurations based on their application of interest, overloading the catchall term ``curse of multilinguality'' and related concepts.
Given our findings, we recommend that future work make their configuration of interest explicit to promote scientific clarity regarding the \com.

\section{Conclusion}
\label{sec:conclusion}
The curse of multilinguality describes the phenomenon of performance degradation in fixed capacity multilingual models with increasing language scale.
Our work provides the first theoretical and intrinsic grounding to this phenomenon.
Specifically, we look at whether multilingual representation spaces can theoretically can be scaled in language coverage while maintaining their quality.
We formulate the notion of \emph{perfect multilinguality} for embedding spaces, embodied by two \emph{multilinguality conditions}.
Our main contribution is a theoretical result that shows it is necessary and sufficient to scale representation space capacity only logarithmically in the number of languages while maintaining perfect multilinguality.
This shows that there is no theoretical curse of multilinguality for embedding space structure.
We also demonstrate the empirical curse of multilinguality for embedding space structure for the first time, and show that it is sensitive to data, compute, and evaluation conditions.
Our findings contribute to better formulating and understanding the curse of multilinguality, and raise questions regarding bridging the gap between theory and practice.

\section*{Limitations}

\paragraph{Perfect multilinguality} Our formulation of perfect multilinguality for multilingual representation spaces is most directly relevant for applications such as embedding-based retrieval.
Different multilingual representation spaces, such as intermediate representations of decoder models, may require adaptation based on needs from different downstream application scenarios.
Further, this conception uses a simplified scenario consisting of a global concept set with global semantics across languages. 
This allows the cost of concept semantics to stay constant as language coverage grows in theory.
In practice, different languages may introduce language-specific concepts, or have differing semantic relationships between concepts, introducing language-specific needs and additional dimensionality costs.
We leave this to future work to incorporate into a theoretical model.

\paragraph{Model scale} Our empirical studies are for small encoder models given the controlled nature and language scale of our study.
We leave it to future work to study the impact of model scaling and different architectures on the curse of multilinguality for embedding space structure.

\subsection*{AI use statement}
In this work, we used generative AI tools as a soundboard for mathematical ideas, for ironing out details of and checking for errors in proofs, and for help with ideation for the proof in \Cref{app:constants}.
All mathematical arguments in this paper have been written and verified by the authors. 
We used AI-assisted literature search tools in combination with manual search for discovering previous work, following by a manual review of relevance.
We used coding assistants for our experiments, with human review. 
AI was used for typesetting of mathematical expressions and formatting scientific figures in the writing of this paper, and for minor writing refinements.
We take responsibility for the final content of this work, including text, claims or artifacts produced with the aid of generative AI.

\bibliography{iclr2027_conference}
\bibliographystyle{iclr2027_conference}

\clearpage

\appendix
\onecolumn
\section{Proof details}
\label[appendix]{app:proofs}

\subsection{Spherical cap bound for minimum sufficient dimensionality}
\label[appendix]{app:packing_sufficient}

In this section, we describe and prove the result necessary for the proof in \Cref{sec:proof_suff}.
Specifically, we will show that the number of points that can be placed on a $D_L$-dimensional sphere with radius $r$ given a minimum and maximum cosine similarity following \Cref{eq:language-offset-two-sided-cosine}, and given a suitable choice of $\rho$, grows exponentially in $D_L$.

Recall that $\gamma<1$ is the maximum allowed similarity in the non-degenerate space $Z$, given to us. 
We can assume that $\gamma>0$; if observed maximum cosine similarity is negative, the non-degeneracy condition would still hold for a larger $\gamma$.

We will use $r = 1$ in our construction for simplicity and without loss of generality.
\Cref{eq:language-offset-two-sided-cosine} simplifies to 
\[
 \gamma < \cos(u_\ell, u_m) < \rho < 1
\]
We will choose $\rho$ such that  $\gamma<\rho<1$ and $\arccos(\rho) < \arccos(\gamma)/2$.

Now, we will show the following.

\begin{lmm}
Fix constants $0<\gamma<\rho<1$, such that $\arccos(\rho) < \arccos(\gamma)/2$. Then $d$ dimensions are sufficient to pack $\exp(\Omega(d))$ points on the unit sphere $S^{d-1}$ with maximum cosine similarity $\rho$ and minimum cosine similarity $\gamma$. That is, there exists a set $\mathcal{M} \subset S^{d-1}$ such that for all distinct $u,u'\in \mathcal{M}$, $\gamma < \langle u,u'\rangle < \rho$, and $|\mathcal{M}|=\exp(\Omega(d))$.
\end{lmm}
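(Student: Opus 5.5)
We will build $\mathcal{M}$ inside a spherical cap around a fixed pole and then use a maximal-packing argument, as sketched in the proof of \Cref{lmm:X-log-growth-sufficient}. Write $\alpha := \arccos(\rho)$ and $\beta := \arccos(\gamma)$, so $0<\alpha<\beta/2<\pi/2$ and the required condition on distinct $u,u'\in\mathcal{M}$ reads $\alpha < \angle(u,u') < \beta$. Fix a pole $e\in S^{d-1}$ and choose a radius $\phi$ with $\alpha < \phi < \beta/2$, which is possible precisely because $\arccos(\rho) < \arccos(\gamma)/2$. Let $K := \{u\in S^{d-1} : \angle(u,e)\le \phi\}$ be the containing cap.

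\textbf{Step 1 (upper constraint is automatic).} For any $u,u'\in K$, the triangle inequality for the geodesic metric on the sphere gives $\angle(u,u') \le \angle(u,e)+\angle(e,u') \le 2\phi < \beta$. Hence $\langle u,u'\rangle > \cos\beta = \gamma$ for every pair of points in $K$. This is exactly where the hypothesis $\alpha<\beta/2$ is used.

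\textbf{Step 2 (maximal separated set).} Take $\mathcal{M}\subset K$ to be a maximal subset with pairwise angles strictly greater than $\alpha$. To guarantee existence and finiteness, we can equally take a maximal set with angles $\ge \alpha'$ for some $\alpha'\in(\alpha,\phi)$; finiteness then follows from compactness. Every pair then satisfies $\langle u,u'\rangle < \rho$, and by Step 1 also $\langle u,u'\rangle > \gamma$. By maximality, the caps $\mathrm{Cap}(u,\alpha')$ for $u\in\mathcal{M}$ cover $K$. Writing $\sigma$ for the normalized surface measure, this gives
\[
|\mathcal{M}| \ge \frac{\sigma(\mathrm{Cap}(e,\phi))}{\sigma(\mathrm{Cap}(e,\alpha'))}.
\]

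\textbf{Step 3 (ratio of cap measures is exponential; main obstacle).} The remaining task is to show that this ratio is $\exp(\Omega(d))$ whenever $\alpha'<\phi<\pi/2$. We would use the formula $\sigma(\mathrm{Cap}(e,t)) = c_d\int_0^t \sin^{d-2}s\,ds$, in which the normalizing constant $c_d$ cancels in the ratio. For the denominator, monotonicity of $\sin$ on $[0,\pi/2]$ gives $\int_0^{\alpha'}\sin^{d-2}s\,ds \le \alpha'\sin^{d-2}\alpha'$. For the numerator, pick $\phi'$ with $\alpha'<\phi'<\phi$; then $\int_0^{\phi}\sin^{d-2}s\,ds \ge (\phi-\phi')\sin^{d-2}\phi'$. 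Combining the two bounds,
\[
|\mathcal{M}| \ge \frac{\phi-\phi'}{\alpha'}\left(\frac{\sin\phi'}{\sin\alpha'}\right)^{d-2} = \exp(\Omega(d)),
\]
because $\sin\phi'/\sin\alpha'>1$ is a constant depending only on $\gamma$ and $\rho$. The delicate part is keeping every angle below $\pi/2$ so that the monotonicity steps are valid. This holds here since $\phi<\beta/2<\pi/2$, using $\gamma>0$ so that $\beta<\pi/2$, which is the assumption made at the start of this appendix.
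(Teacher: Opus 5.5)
Your proposal is correct and follows the paper's proof essentially step for step: a containing cap of radius strictly between $\arccos\rho$ and $\arccos(\gamma)/2$ that forces the lower similarity bound via the triangle inequality, a maximal $\arccos\rho$-separated subset whose small caps cover that containing cap, and an exponential lower bound on the ratio of cap measures. The differences are only technical. You bound that ratio directly from $\sigma(\mathrm{Cap}(e,t)) = c_d\int_0^t \sin^{d-2}s\,ds$ rather than citing the cap-measure asymptotics $\Theta_\alpha(\sin^{d-1}\alpha/\sqrt{d})$, and your slightly inflated separation $\alpha'>\arccos\rho$ cleanly handles existence of the maximal set and the open-versus-closed cap issue in the covering step, which the paper glosses over.
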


\paragraph{Proof sketch}
We will place points on a cap on a multi-dimensional sphere bounded with some angular radius.
Since all points are within this cap, the maximum cosine similarity is bounded, letting us satisfy the maximum similarity constraint. 
Now we will fit points on this cap such that any two points have a minimum angular radius from each other. 
Imagine each such point making a small cap around itself with angular radius as the minimum required angular separation.
Now, consider the maximal set of fitting points in the big cap.
The union of the area of the small caps must cover the area of the big cap, since, if there was an uncovered point, that would mean that it was not within the minimum required separation of any other point, and we would be able to add in to our maximal set.
Since we can express the union of the area of the small caps in terms of the number of points, we therefore get a lower bound on the maximum number of fitting points in terms of the areas of the two caps.
By standard spherical cap results, the area of a spherical cap grows exponentially in the dimension of the sphere.
Thus, as long as the angular radius of the big cap is large enough relative to that of the small cap, guaranteed by our choice of $\rho$, the maximum number of points we can fit that satisfy the above constraints also grow exponentially in the dimension of the sphere, giving us the required result that the sufficient dimension to fit $L$ points is logarithmic in $L$.

\begin{proof}
Let $\theta_{\min}:=\arccos(\rho)$ and $\theta_{\max}:=\arccos(\gamma)$. 
Since $\arccos(\rho) < \arccos(\gamma)/2$, and $\gamma > 0$, we have $ \theta_{\min}<\theta_{\max}/2 < \pi/4$.

Choose $\beta$ such that $\theta_{\min}<\beta<\frac{\theta_{\max}}{2} < \pi/4$.
Fix a direction $v\in S^{d-1}$ and define the spherical cap around $v$:
\[
\begin{aligned}
& \mathcal C_\beta(v)  := \{u\in S^{d-1}: \angle(u,v) < \beta\}.
\end{aligned}
\]

This means that any two points in this cap $u$ and $u'$ can have angular distance at most $2\beta<\theta_{\max}$ (since each has angular distance at most $\beta$ from $v$).

Now consider a set of points $\mathcal{Q}$ inside $\mathcal C_\beta(v)$ such that every pair in this set has angular distance greater than $\theta_{\min}$. Equivalently, $\langle u,u'\rangle < \rho$. 
Thus, for any distinct $u,u'\in \mathcal{Q}$, we will have
\[
\begin{aligned}
& \theta_{\min} < \angle(u,u') < 2\beta<\theta_{\max} \\
& \implies
\cos(\theta_{\max}) 
<
\langle u,u'\rangle
<
\cos(\theta_{\min}) \\
& \implies \gamma < \langle u,u'\rangle < \rho
\end{aligned}
\]

Thus, the set $\mathcal{Q}$ satisfies our required angular constraints.
In order to prove our lemma, it is enough to show that the maximal size of $\mathcal{Q}$ grows exponentially with $d$; in other words, that $d$ dimensions are enough to pack some set of points $\mathcal{M}$ with size growing exponentially in $d$.

Let $\mathcal{M}=\{u_1,\ldots,u_M\}\subset \mathcal C_\beta(v)$ with size $M = |\mathcal{M}|$ be a maximal set satisfying $\angle(u_i,u_j) > \theta_{\min} ~\text{for all } i\neq j$.

Now, any point in $\mathcal C_\beta(v)$ must be within angular distance $\theta_{\min}$ of some point in $\mathcal{M}$; otherwise this point could be added to $\mathcal{M}$. 
Therefore, $C_\beta(v)$ is a subset of the union of spherical caps created by defining a minimum angle radius around each point in $\mathcal{M}$. In other words: 
\[
\mathcal C_\beta(v)
\subseteq
\bigcup_{i=1}^M \mathcal C_{\theta_{\min}}(u_i).
\]
Now we will consider the surface area on the sphere. 
Taking normalized spherical measure $\sigma$, and since $\sigma$ only depends on angular radius, 
\[
\begin{aligned}
\sigma(\mathcal C_\beta(v))
& \le
\sum_{i=1}^M \sigma(\mathcal C_{\theta_{\min}}(u_i))
=
M\sigma(\mathcal C_{\theta_{\min}}) \\
& \implies
M
\ge
\frac{\sigma(\mathcal C_\beta)}{\sigma(\mathcal C_{\theta_{\min}})}.
\end{aligned}
\]

For fixed angular radii, spherical cap measure scales exponentially in $d$ as per standard cap measure results \citep{naszodi2016some}. 
Specifically, for $0<\alpha<\pi/2$, the normalized measure of a spherical cap of angular radius $\alpha$ in $S^{d-1}$ given by \citet[Lemma~3.4, Eq.~(7)]{naszodi2016some}, attributed to \citet{boroczky2003covering}, satisfies
\[
\sigma(\mathcal C_\alpha)
=
\Theta_\alpha\!\left(
\frac{\sin^{d-1}(\alpha)}{\sqrt d}
\right)
\]%

Applying this with $\alpha=\beta < \pi/2$ and
$\alpha=\theta_{\min}<\pi/2$ gives
\[
M
\ge
\frac{\sigma(\mathcal C_\beta)}
     {\sigma(\mathcal C_{\theta_{\min}})}
=
\Omega\!\left(
\left(
\frac{\sin\beta}
     {\sin\theta_{\min}}
\right)^{d-1}
\right),
\]

Since $\beta>\theta_{\min}$, we have $\sin\beta>\sin\theta_{\min}$. Since $\beta$ and $\theta_{\min}$ are fixed, there exists a constant $C_M>0$, independent of $d$, such that
\[
\begin{aligned}
M\ge \exp(C_Md) \implies M = \exp(\Omega(d))
\end{aligned}
\]
That is, there exists some set $\mathcal{M}$ containing $\exp(\Omega(d))$ points whose pairwise cosine similarities lie between $\gamma$ and $\rho$, on a $d$-dimensional sphere.

In other words, the minimum sufficient dimension to fit $L$ points whose pairwise cosine similarities lie between $\gamma$ and $\rho$ on $S^{d-1}\subset\mathbb R^d$ is $d=O(\log L)$.
\end{proof}

\subsection{Spherical cap bound for necessary dimensionality}
\label[appendix]{app:packing_necessary}

In this section, we describe and prove the result necessary for the proof in \Cref{sec:proof_nec}.
Specifically, we will show that the dimensionality required to place $L$ fixed-norm points on a $D_{X^L}$-dimensional sphere with some minimum angular separation grows at least logarithmically in $L$.

Given $\gamma < 1$ as the maximum allowed cosine similarity, we will use minimum angular separation $\theta = \arccos (\gamma)$, with $0<\theta<\pi$. 
Now, we will show the following:

\begin{lmm}
Fix a constant $0<\theta<\pi$. 
Suppose $\mathcal{M}\subset S^{d-1}$ is a set of $M=|\mathcal{M}|$ points on a unit sphere with pairwise angular separation greater than $\theta$. 
That is, for all distinct $u,u'\in \mathcal{M}$, $\angle(u,u')> \theta$. 
Then there exists a constant $A_\theta>0$, independent of $M$ and $d$, such that $M\le\exp(A_\theta d)$.
Equivalently, any such packing of $M$ points requires $d=\Omega(\log M)$.
\end{lmm}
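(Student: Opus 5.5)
The plan is the standard volume (disjoint-cap) packing argument, mirroring the covering argument used in \Cref{app:packing_sufficient} but with the inequality reversed. Set $\alpha := \theta/2$, so $0<\alpha<\pi/2$. Around each $u_i\in\mathcal{M}$ place the open cap $\mathcal C_\alpha(u_i)=\{w\in S^{d-1}:\angle(w,u_i)<\alpha\}$. The first step is to show these caps are pairwise disjoint. If some $w$ lay in both $\mathcal C_\alpha(u_i)$ and $\mathcal C_\alpha(u_j)$ for $i\neq j$, the triangle inequality for the geodesic (angular) metric on the sphere would give $\angle(u_i,u_j)\le \angle(u_i,w)+\angle(w,u_j)<2\alpha=\theta$, contradicting the separation hypothesis. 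I will state explicitly that angular distance is a metric on $S^{d-1}$, since this is the only geometric fact used.

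Next, take the normalized spherical measure $\sigma$, which is rotation invariant, so every cap of radius $\alpha$ has the same measure $\sigma(\mathcal C_\alpha)$. Disjointness gives
\[
M\,\sigma(\mathcal C_\alpha)=\sum_{i=1}^M \sigma(\mathcal C_\alpha(u_i))\le \sigma(S^{d-1})=1,
\qquad\text{hence}\qquad M\le \frac{1}{\sigma(\mathcal C_\alpha)}.
\]

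The remaining, and main, step is a lower bound on $\sigma(\mathcal C_\alpha)$ that is exponential in $d$ with a base depending only on $\theta$. I would first invoke the same cap-measure estimate already used in \Cref{app:packing_sufficient}, namely $\sigma(\mathcal C_\alpha)=\Theta_\alpha\!\left(\sin^{d-1}(\alpha)/\sqrt d\right)$ from \citet{naszodi2016some}. Its lower half gives $\sigma(\mathcal C_\alpha)\ge c_\alpha \sin^{d-1}(\alpha)/\sqrt d$, and therefore
\[
M\le \frac{\sqrt d}{c_\alpha}\left(\frac{1}{\sin\alpha}\right)^{d-1}.
\]
The hidden constant $c_\alpha$ and the $\sqrt d$ factor have to be absorbed into a single exponential valid for \emph{all} $d\ge1$, not just asymptotically. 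I would use $\sqrt d\le e^{d}$ and $1/c_\alpha\le e^{K_\alpha d}$ for a suitable $K_\alpha\ge0$ (valid since $d\ge1$). This gives $M\le\exp(A_\theta d)$ with $A_\theta:=1+K_\alpha+\log(1/\sin(\theta/2))$, which is positive and depends only on $\theta$.

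If I want to avoid depending on the precise constant in the cited estimate, the fallback is an elementary bound. The cap $\mathcal C_\alpha(u)$ contains the set of unit vectors $w/\|w\|$ with $w$ in a Euclidean ball of radius $\sin\alpha$ around $u$, intersected with a cone. Comparing Gaussian or ball volumes then yields $\sigma(\mathcal C_\alpha)\ge \tfrac12(\sin(\alpha)/2)^{d}$ or similar, which again gives an exponential with a $\theta$-only base. Taking logarithms of $M\le \exp(A_\theta d)$ finally gives $d\ge \log M/A_\theta$, i.e.\ $d=\Omega(\log M)$.
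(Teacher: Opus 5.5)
Your proposal is correct and follows essentially the same route as the paper: disjoint caps of angular radius $\theta/2$ via the triangle inequality for angular distance, then $M\le 1/\sigma(\mathcal C_{\theta/2})$, then the cap-measure lower bound from \citet{naszodi2016some} (the paper uses its explicit form $\sigma(\mathcal C_\alpha)\ge \sin^{d-1}(\alpha)/\sqrt{2\pi d}$). Your explicit absorption of the $\sqrt d$ and constant factors into a single exponential valid for all $d\ge 1$ is slightly more careful than the paper's appeal to $\log(2\pi d)=o(d)$, and your elementary fallback also works, since the cone over the cap within $B(0,2)$ contains $B(u,\sin\alpha)$, giving $\sigma(\mathcal C_\alpha)\ge(\sin(\alpha)/2)^d$.
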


\paragraph{Proof sketch}
We will place $M$ points on a multidimensional sphere.
Think of each point in $\mathcal{M}$ forming a spherical cap on the sphere with angular radius as half of the minimum required angular separation.
Since each pair of points must have the required angular separation, these spherical caps must be disjoint: if a point on the sphere belonged to caps associated with two points in $\mathcal{M}$, those two points would not satisfy the criterion by triangle inequality. 
Thus, the surface area of the whole sphere must be larger than the union of the surface area of the spherical caps.
Since we have $M$ spherical caps and can express the union of the surface area of the spherical caps in terms of $M$, we get an upper bound on $M$ in terms of an inverse spherical cap measure, which scales exponentially in the dimension of the sphere.
Thus, we get that the minimum dimension that can fit $M$ points is at least logarithmic in $M$.

\begin{proof}
Let $\mathcal{M}=\{u_1,\ldots,u_M\}\subset S^{d-1}$ satisfy $\angle(u_i,u_j)> \theta \qquad \text{for all } i\neq j$.

Around each point $u_i$, place a spherical cap of angular radius $\theta/2$:
\[
\mathcal C_{\theta/2}(u_i)
:=
\{u\in S^{d-1}: \angle(u,u_i)\le \theta/2\}.
\]
These caps are disjoint, since if some point $w$ belonged to both $\mathcal C_{\theta/2}(u_i)$ and $\mathcal C_{\theta/2}(u_j)$, then by the triangle inequality, $\angle(u_i,u_j) \le \angle(u_i,w)+\angle(w,u_j) \le \theta$,
contradicting the separation constraint.

Taking normalized spherical measure $\sigma$, we have
\[
\sum_{i=1}^M \sigma(\mathcal C_{\theta/2}(u_i))
\le
\sigma(S^{d-1})
=
1
\]
Since $\sigma$ only depends on angular radius,
\[
M\sigma(\mathcal C_{\theta/2})
\le
1
\implies
M
\le
\frac{1}{\sigma(\mathcal C_{\theta/2})}.
\]

For fixed angular radius, spherical cap measures are at least exponentially small in $d$. More concretely, for $0<\alpha<\pi/2$, the normalized measure of a spherical cap of angular radius $\alpha$ in $S^{d-1}$, as given by \citet[Lemma~3.4, Eq.~(7)]{naszodi2016some}, attributed to \citet{boroczky2003covering} satisfies
\[
\sigma(\mathcal C_\alpha)
\ge
\frac{1}{\sqrt{2\pi d}}\sin^{d-1}(\alpha).
\]
Applying this with $\alpha=\theta/2<\pi/2$ gives
\[
\begin{aligned}
M
& \le
\frac{1}{\sigma(\mathcal C_{\theta/2})}
\le
\sqrt{2\pi d}
\left(
\frac{1}{\sin(\theta/2)}
\right)^{d-1} \\
& \implies
\log M
\le
\frac12\log(2\pi d) + (d-1)\log\!\left(\frac{1}{\sin(\theta/2)}\right)
\end{aligned}
\]

Since $\theta$ is fixed, $\sin(\theta/2)$ is a fixed constant in $(0,1)$. 
Since $\log(2\pi d) = o(d)$, there exists a constant $A_\theta>0$ depending only on $\theta$, such that

\[
\begin{aligned}
\log M \le A_\theta d  \implies  d \ge \frac{1}{A_\theta}\log M \implies d=\Omega(\log M)
\end{aligned}
\]

Therefore any set of $L$ points on $S^{d-1} \subset \mathbb{R}^d $ with fixed pairwise cosine separation $\gamma$ requires $d=\Omega(\log L)$.
\end{proof}

\subsection{Fitting 7000 languages in 30 extra dimensions}
\label[appendix]{app:constants}

Our constructive proof in \Cref{sec:proofs} and \Cref{app:packing_sufficient} constructs a language offset space $U$ with dimensionality $D_L$, and shows that that $D_L=O(\log L)$ dimensions are sufficient to fit $L$ points on a $D_L$-dimensional unit sphere given minimum cosine similarity $\gamma$ and maximum cosine similarity $\rho$.

However, this bounds hides a constant which may be of concern for practical considerations.
In this section, we prove the following:
\begin{lmm}
It is possible to fit $L = 7000$ points on a $30$-dimensional sphere with minimum pairwise cosine similarity $\gamma = 0.9$ and maximum cosine similarity $\rho= 0.99$.
\end{lmm}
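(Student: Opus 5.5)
The plan is to use a ``cone'' construction, which reduces the two-sided constraint to a symmetric one, followed by a probabilistic argument with alterations. Fix a unit vector $v\in\mathbb{R}^{30}$ and let $W\cong\mathbb{R}^{29}$ be its orthogonal complement. For unit vectors $w_1,\ldots,w_L\in S^{28}\subset W$ and a parameter $a\in(0,1)$, define
\[
u_i := \sqrt{a}\,v+\sqrt{1-a}\,w_i \in S^{29}.
\]
Since $v\perp w_i$, we get $\langle u_i,u_j\rangle = a+(1-a)\langle w_i,w_j\rangle$. The requirement $0.9<\langle u_i,u_j\rangle<0.99$ then becomes
\[
\frac{0.9-a}{1-a}<\langle w_i,w_j\rangle<\frac{0.99-a}{1-a}.
\]
I will choose $a=0.945$, the midpoint of the two bounds, so that the window is symmetric: $|\langle w_i,w_j\rangle|<s:=0.045/0.055=9/11\approx 0.818$. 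The lemma therefore reduces to the following claim: \emph{there exist $7000$ unit vectors in $\mathbb{R}^{29}$ with all pairwise $|\langle w_i,w_j\rangle|<9/11$.} This is a spherical code question with angular separation $\arccos(9/11)$, taken with antipodes.

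To prove the reduced claim, I will draw $N$ independent uniform points on $S^{28}$, with $N$ slightly above $7000$ (say $N=7100$). Call a pair bad if $|\langle w_i,w_j\rangle|\ge s$. By rotation invariance, the probability that a pair is bad equals $2\sigma(\mathcal C_\alpha)$ on $S^{28}$, where $\alpha=\arccos(s)$ and $\sin^2\alpha = 1-81/121 = 40/121$. I will bound the cap measure with the upper-bound companion of the Naszódi/Böröczky estimate used in \Cref{app:packing_necessary}, or directly by comparing $\int_s^1(1-t^2)^{(n-3)/2}\,dt$ with the normalizing integral. Either route gives a bound of order $\sin^{27}\alpha/\sqrt{n}$, roughly $10^{-7}$ per tail. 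The expected number of bad pairs is then at most $\binom{N}{2}\cdot 2\sigma(\mathcal C_\alpha)$, which should come out on the order of ten. This is far below the slack $N-7000=100$. Hence some realization has at most $99$ bad pairs. Deleting one endpoint of each bad pair leaves at least $7000$ points with no bad pair, which proves the claim. The resulting $u_i$ satisfy the lemma.

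The main obstacle is getting the numerical constant rigorous. A plain union bound over all pairs, without deletion, is borderline: it gives an expectation of about $9$, not below $1$. Likewise, the crude Gaussian tail bound $e^{-ns^2/2}$ is far too weak. So I will need an explicit, non-asymptotic cap estimate in dimension $29$, for instance $\sigma(\mathcal C_\alpha)\le \frac{\sin^{n-2}\alpha}{2\cos\alpha\,\sqrt{2\pi(n-2)}}$-type bounds, or an exact evaluation of the incomplete beta integral. I will first try the exact beta-integral computation, since $n=29$ is small. If the random bound proves too loose, my fallback is deterministic. The shortened binary Hamming code of length $29$ has $2^{24}$ codewords with minimum distance $3$. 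Keeping only codewords of weight at most $13$ forces all pairwise distances into $[3,26]$. Mapping each codeword to $\pm1/\sqrt{29}$ coordinates gives inner products $1-2d/29\in[-23/29,23/29]$, and $23/29<9/11$. It then only remains to check that at least $7000$ codewords have weight at most $13$, which follows from the binomial-like weight distribution of Hamming codes.
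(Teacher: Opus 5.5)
Your proposal is correct and follows the paper's proof essentially exactly. It uses the same cone construction $\sqrt{0.945}\,e_0+\sqrt{0.055}\,y_i$, reduces to $|\langle y_i,y_j\rangle|\le 9/11$ on $S^{28}$, draws i.i.d.\ uniform points, and gets the per-pair tail from the incomplete beta value $I_{40/121}(14,\tfrac12)$. The only difference is your deletion step, which is harmless but unnecessary: the exact two-sided tail is about $3.35\times 10^{-8}$, not roughly $2\times 10^{-7}$. The plain union bound therefore already gives $\binom{7000}{2}p\approx 0.82<1$, which is what the paper uses, so your claim that it is borderline at about $9$ is a miscalculation.
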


$\gamma$ is set based on the choice of $0.1$ as a standard scale of cosine separation for text embedding models \citep{weller2026on}.
Note that our proof requires $\arccos(\rho) < \arccos(\gamma)/2$ which holds for this choice of $\rho$.

\paragraph{Sketch of proof}
We will construct each point having 1 shared dimension with an appropriate length, which will guarantee minimum similarity. 
We will use the remaining 29 dimensions to guarantee difference, i.e. ensure that any two points have atmost the maximum similarity. 
In particular, we will prove this statement using an existence argument: if we choose 7000 points at random on a sphere in 29 dimensions, the probability that no pair of points crosses a particular threshold of similarity is non-zero.
The existence of such a configuration gives us the above result.

\begin{proof}
Consider the space $\mathbb{R}^{30} = \text{span}(e_0) \oplus \mathbb{R}^{29}$, where $e_0$ is a unit vector orthogonal to $\mathbb{R}^{29}$. 
Now, construct
\[
x_i
=
\sqrt{0.945}\,e_0
+
\sqrt{0.055}\,y_i,
\qquad
y_i\in S^{28}\subset\mathbb{R}^{29}.
\]
Then \(\lVert x_i\rVert_2=1\) and since all vectors have unit norm, cosine similarity is equal to the inner product. Now, we consider the similarity between any two vectors $x_i$ and $x_j$:
\[
\begin{aligned}
\langle x_i,x_j\rangle
&=
0.945+0.055\langle y_i,y_j\rangle.
\end{aligned}
\]

Therefore, if
$\left|\langle y_i,y_j\rangle\right| \leq \frac{9}{11}$,  $\text{for all }i\neq j$,
then
\[
\begin{aligned}
\langle x_i,x_j\rangle
&\in
\left[
0.945-0.055\cdot\frac{9}{11},
\,
0.945+0.055\cdot\frac{9}{11}
\right] \\
&=
[0.9,0.99].
\end{aligned}
\]

It remains to show that such \(y_1,\ldots,y_{7000}\) exist.
Suppose that
\[
y_1,\ldots,y_{7000}
\overset{\mathrm{i.i.d.}}{\sim}
\operatorname{Unif}(S^{28}).
\]

Fix any pair \(i\neq j\). Since the distribution on the sphere is rotationally invariant, we may fix one point \(Y=e_1 = [1,0,0,...]\) and choose only the second point \(Z\sim\operatorname{Unif}(S^{28})\). The distribution of the inner product is unchanged. Thus, the dot product of $Y$ and $Z$ is distributed identically to just the first coordinate of $Z$:
\[
\langle Y,Z\rangle
\stackrel{d}{=}
Z_1,
\]
$Z_1$ is the first coordinate of a uniformly random point on \(S^{28}\). The distribution of the coordinate's value can be expressed in terms of a Beta distribution, as follows,\footnote{Briefly, this is because of the following: we generate a random point by choosing $29$ dimensions independently and normalizing all coordinates to result in unit norm. The first coordinate can then be expressed as $Z_1^2 \sim \frac{A}{A+B}$ where $A \sim \chi^2_1$ and $B \sim \chi^2_{28}$ and $A$ and $B$ are independent. This standardly gives us $Z_1^2 \sim \operatorname{Beta}\!\left(\frac12,14\right)$.}
\[
\langle Y,Z\rangle^2
\sim
\operatorname{Beta}\!\left(\frac12,14\right).
\]

Therefore, for any fixed pair,
\begin{align}
p
&:=
\Pr\left(
\left|\langle Y,Z\rangle\right|^2
>
\left(\frac{9}{11}\right)^2
\right) \\
&=
I_{40/121}\!\left(14,\frac12\right)
\approx
3.3465\times10^{-8},
\end{align}
where \(I_x(a,b)\) denotes the regularized incomplete beta function.

There are \(\binom{7000}{2}\) pairs. By the union bound,
\[
\begin{aligned}
\Pr\left(
\exists\,i<j:
\left|\langle y_i,y_j\rangle\right|
>
\frac{9}{11}
\right)
&\le
\binom{7000}{2}p \\
&\approx
0.8198
<
1.
\end{aligned}
\]
Hence,
\[
\Pr\left(
\left|\langle y_i,y_j\rangle\right|
\le
\frac{9}{11}
\text{ for all }i\neq j
\right)
>
0.
\]
Therefore, there exists a realization \(y_1,\ldots,y_{7000}\) satisfying the required bound, and the corresponding vectors \(x_1,\ldots,x_{7000}\in\mathbb{R}^{30}\) have pairwise cosine similarities in \([0.9,0.99]\).
That is, it is possible to fit $7000$ points, or language offsets, satisfying the constraints from our proof such that the resulting space would maintain the multilinguality conditions, in $30$ additional dimensions.
\end{proof}

\subsection{Fixed versus variable norm assumptions}
\label[appendix]{app:norms}

Our proof of \Cref{thm:main-thm} assumes a fixed-norm concept space $Z$ and multilingual space $X^L$.
This proof is extendable to the variable norm case.
We provide a brief sketch of the extension below.

\paragraph{Extending \Cref{thm:main-sufficient-dim}} 
We will still use fixed-norm $U$ in our construction, assuming variable-norm $Z$, resulting in variable-norm $X^L$ constructed as $x_{c,\ell} = z_c \oplus \|z_c\| u_\ell$.
Since $\|u_\ell\|=r$, we have $\|x_{c,\ell}\| = \|z_c\|\sqrt{1+r^2}$, and therefore the similarity decomposition in \Cref{eq:distance-decomposition} still works out as
\[
\begin{aligned}
\cos(x_{c,\ell},x_{d,m})
&=
\frac{
\langle z_c,z_d\rangle
+
\|z_c\|\cdot \|z_d\|\langle u_\ell,u_m\rangle
}{
\|z_c\|\cdot \|z_d\|(1+r^2)
}\\
&=
\frac{
\cos(z_c,z_d)+\langle u_\ell,u_m\rangle
}{
1+r^2
}
\end{aligned}
\]
Now, our proofs showing that $X^L$ upholds the multilinguality conditions and is non-degenerate follow as before.
The proof of the upper-bound on the sufficient dimensionality of $U$ in \Cref{app:packing_sufficient} also works as-is, since we used fixed-norm $U$.

\paragraph{Extending \Cref{thm:main-necessary-dim}} 
Given that $X^L$ is variable-norm, we can simply normalize each point in it and obtain unit-norm space $\widehat{X}^L$. 
This operation does not change the dimension of the space and maintains cosine similarities.
Now, we can lower-bound the dimension of $\widehat{X}^L$ exactly as in \Cref{sec:proof_nec} and this bound holds for variable norm $X^L$.

\section{Additional details and results for the empirical curse}
\label[appendix]{app:empirical_com}

\subsection{Metrics for multilinguality conditions}
\label[appendix]{sec:metrics}

We provide more details here on metrics used in \Cref{sec:empirical_com}.
We use multiparallel data, and so the concept set is identical across languages.

\paragraph{Monolingual structure} 
As per \Cref{dff:monolingual-structure}, the monolingual structure condition is concerned with the quality of monolingual relationships for a given language, and uses a theoretical ideal concept space as reference.
Given a reference space $Z$ and target space $X$, we use \emph{k-nearest-neighbour overlap (\texttt{MS-NNO})} to measure the quality of monolingual neighbourhoods in the subspace $X_t \subset X$, for each language $t$. 
Given $\mathcal{T}$ is the set of target languages, $\mathcal{C}$ is the set of language-agnostic concepts, $\mathcal{N}_X^t(c,k)$ is the neighbourhood of $k$ closest concepts for $c$ in language $t$ in space $X$, and $\mathcal{N}_Z(c,k)$ is the reference concept neighbourhood,
\[
\begin{aligned}
\mathrm{NNO}(X,Z,k)
&= \frac{1}{|\mathcal{T}|\cdot|\mathcal{C}|\cdot k}
   \sum_{t \in \mathcal{T}}\sum_{c \in \mathcal{C}} \sum_{n^t \in \mathcal{N}_Z(c,k)}
   \mathbf{1}[n^t \in \mathcal{N}_X^t(c,k)]
\end{aligned}
\]

We consider the English concept space from a strong reference model as our theoretical ideal space $Z$ and use mappings from multiparallel data to check set membership as in above.

Note that monolingual text embedding quality in the field of retrieval is measured by \texttt{recall@k} among other metrics, given query and relevance judgments. 
Our evaluation is an adjustment given the lack of multiparallel retrieval datasets and relevance judgments at our required language scale, and therefore treating monolingual neighbourhoods from a strong model as the gold target set.

We use $k=20$ for reporting results, and observe similar trends for varying $k=5,10,20,50$.

We also look at language mean over \emph{normalized masked language modeling loss} on a monolingual test set per language (\texttt{MS-MLM)} as an indirect measure of monolingual structure quality.

\paragraph{Cross-lingual alignment}

Drawing from \Cref{dff:cross-lingual-alignment}, and similar to previous work \citep{roy-etal-2020-lareqa,hammerl-etal-2024-understanding}, we measure cross-lingual alignment in embedding space structure by checking for each language $\ell_1$, whether, for each concept and each language $\ell_2$, the $\ell_2$ equivalent of that concept is closer to the $\ell_1$ equivalent than other $\ell_2$ concepts (weak view; \texttt{CLA-WV}) or any any other concepts regardless of language (strong view; \texttt{CLA-SV}).
Given the set of target languages $\mathcal{T}$, $\ell_s, \ell_t \in \mathcal{T}$, and $\mathcal{G}$ as the comparison group depending on strong or weak view,
\[
\begin{aligned}
\mathrm{CLA}
&= \frac{1}{|\mathcal{C}|\cdot|\mathcal{T}|(|\mathcal{T}|-1)}
   \sum_{\ell_s}\sum_{\ell_t\neq\ell_s}\sum_c \mathbf{1}\!\left[
   \mathrm{argmax}_{x' \in G}(\cos(x_{c,\ell_s}, x')) = x_{c,\ell_t}
   \right].
\end{aligned}
\]
where $\mathcal{G} = \{x_{c', \ell_t} | c'\neq c \}  \cup \{x_{c,\ell_t}\}$ in the weak view and $\mathcal{G} = \{x_{c', \ell_m} | c'\neq c, \ell_m \in \mathcal{T} \}  \cup \{x_{c,\ell_t}\}$ for the strong view.
We study both the weak view and strong view in our experiments and find similar trends.
We report the former.
Note that this evaluation is equivalent to measuring performance on a parallel text mining task.

\subsection{Further experimental details}

\paragraph{Models} 
For the reference model used for monolingual structure metric, we use a strong English text embedding model: \texttt{e5-large-v2}\footnote{\url{huggingface.co/intfloat/e5-large-v2}} \citep{wang2022text}, and \texttt{Llama-3.1-8B-Instruct}\footnote{\url{huggingface.co/meta-llama/Llama-3.1-8B-Instruct}} \citep{grattafiori2024llama} with last token pooling.
We obtain consistent trends. 
Reported results use the former.

\paragraph{Languages} 
Training languages were chosen in priority of token availability in the \texttt{MADLAD-400} corpus, as well as availability in evaluation datasets.
We evaluate on all languages covered in the training language group: \texttt{FLORES+} (90 languages) \citep{goyal-etal-2022-flores,nllb-24}, \texttt{BOUQuET} (90 languages) \citep{andrews-etal-2025-bouquet,omnilingual2026}, and \texttt{WMT24++} (55 languages) \citep{deutsch-2025-wmt}.

\subsection{Additional results}
\label[appendix]{app:additional_results}

\paragraph{Other datasets}

We show results for two other multiparallel datasets in \Cref{fig:main-results-flores} and \Cref{fig:main-results-wmt}. 
We also show \texttt{CLA-SV}, which shows similar trends for \texttt{WMT24++}, but is generally too low for \texttt{FLORES+} and \texttt{BOUQuET} to show meaningful trends.

These show similar trends as shown in \Cref{fig:main-results-bouquet}.

\begin{figure*}[!h]
    \centering
    \includegraphics[scale=0.7]{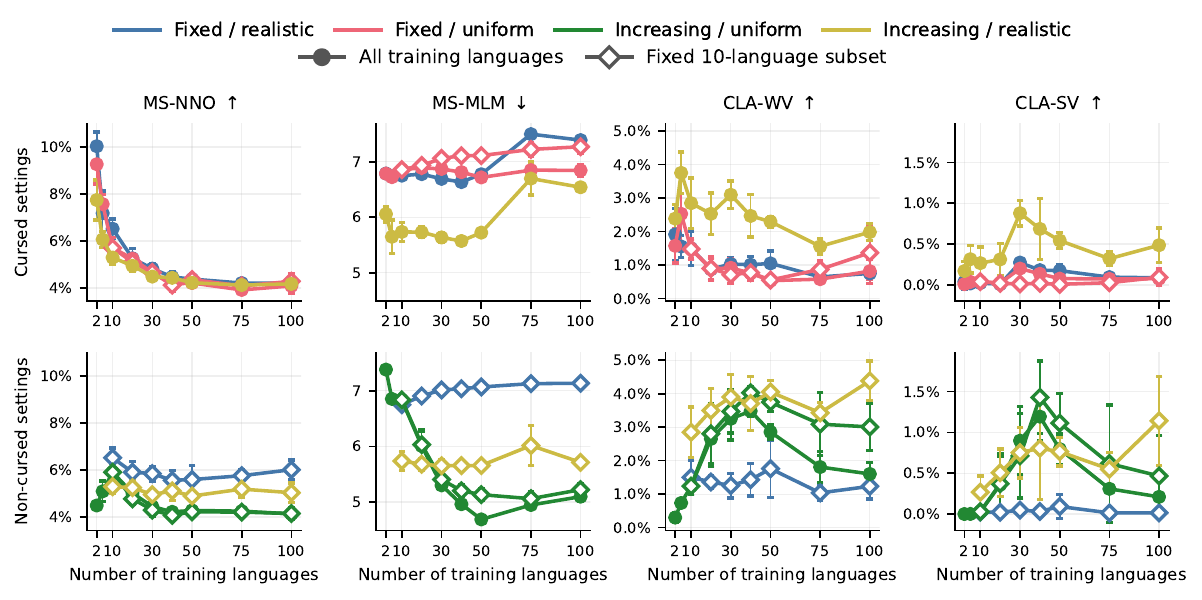}
    \caption{\texttt{MS-NNO}, \texttt{MS-MLM}, and \texttt{CLA} on \texttt{FLORES+} for encoder models with as the number of training languages increases, in eight configurations. The top and bottom rows show cursed and non-cursed configurations respectively. Arrows indicate whether higher or lower metric values are better. Error bars show standard deviation over five model seeds.}
    \label{fig:main-results-flores}
\end{figure*}

\begin{figure*}[!h]
    \centering
    \includegraphics[scale=0.7]{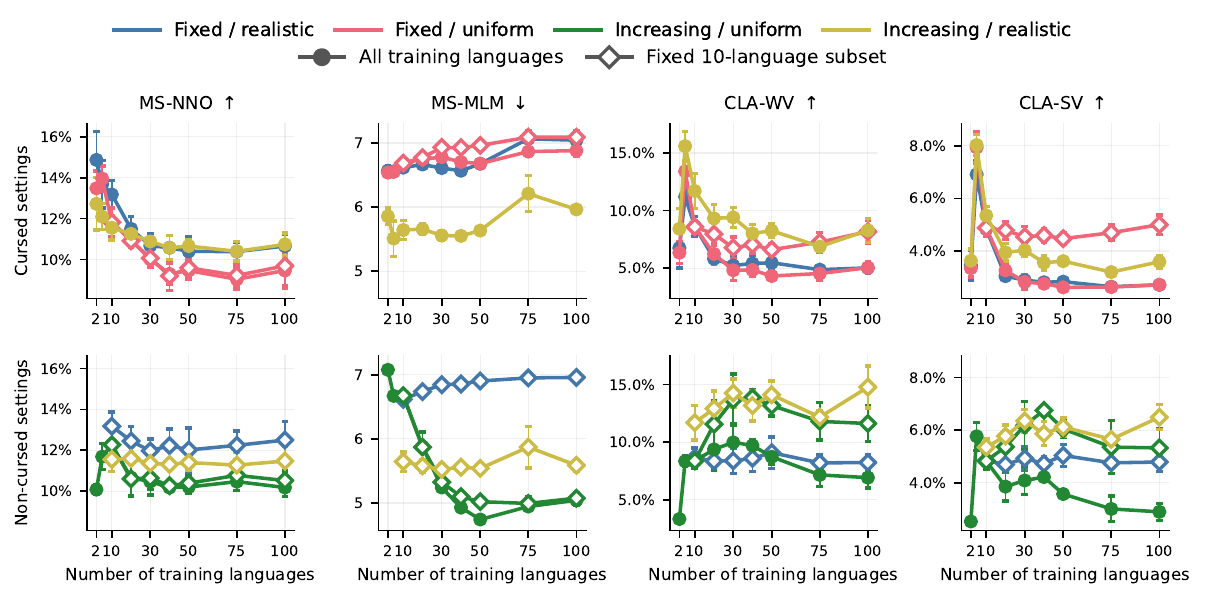}
    \caption{\texttt{MS-NNO}, \texttt{MS-MLM}, and \texttt{CLA} on \texttt{WMT24++} for encoder models with as the number of training languages increases, in eight configurations. The top and bottom rows show cursed and non-cursed configurations respectively. Arrows indicate whether higher or lower metric values are better. Error bars show standard deviation over five model seeds.}
    \label{fig:main-results-wmt}
\end{figure*}

\paragraph{Non-degeneracy}
We plot the mean and standard deviation of pairwise cosine similarity across models for \texttt{BOUQuET} in \Cref{fig:noncollapse-bouquet} (similar for other datasets).
This is stable, although generally high, which is evidence of high anisotropy in the space.

\begin{figure}[!h]
    \centering
    \includegraphics[scale=0.7]{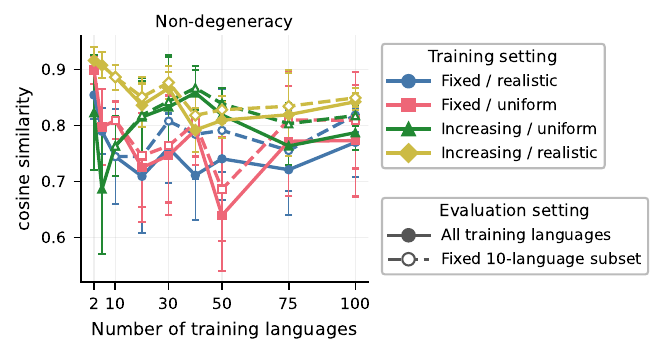}
    \caption{Mean pairwise similarity on \texttt{BOUQuET} across training and evaluation settings. Error bars show standard deviation over five seeds.}
    \label{fig:noncollapse-bouquet}
\end{figure}

\paragraph{Maintaining token count}
In \Cref{sec:empirical_com}, we discussed that the four settings in which metrics stay stable or improve are generally the settings that maintain \textbf{token count for the target languages of interest}.
We elaborate on this here.

With \texttt{fixed-compute}, the token count for each language decreases as we add more languages; thus, mean conditions deteriorate with \texttt{aggregate-all} for both types of sampling.
However, when we have \texttt{fixed-compute} and only look at high-resource languages in \texttt{aggregate-fixed} with \texttt{realistic-sampling}, this effect is somewhat softened since subsequent LRLs consume a smaller part of token budget.
With \texttt{increasing-compute}, token count for target languages remains unchanged with more training languages with \texttt{uniform-sampling}, and metrics stay stable for both types of aggregation. 
With \texttt{increasing-compute} and \texttt{realistic-sampling}, similar to above, HRLs are able to maintain token count, allowing metrics to stay stable for \texttt{aggregate-fixed}, whereas
\texttt{aggregate-all} over \texttt{realistic-sampling} means that that performance means are driven down by low-resource languages.
\comemb trends may potentially also be affected by other things such as model scale, optimization strategies, and data quality, the impact of which we leave to future work to investigate.

\subsection{Additional exploration of language subspace use and dimensionality}
\label[appendix]{app:lang_subspace}

Our construction achieving perfect multilinguality in \Cref{sec:proof_suff} uses compact language offsets that are reused across concepts.
This idea of low-dimensional and reusable language feature representations is of interest to several works \citep{park2023linear,xie-etal-2022-discovering,chang-etal-2022-geometrya}

We want to investigate whether, similarly to our constructive solution, language identity is compactly and reusably represented in our  trained models that maintain the multilinguality conditions, and whether this differs by language scale. 

\paragraph{Defining language subspace efficiency}
We quantify the efficiency of the model regarding language information as the effective dimensionality of the \emph{language subspace}, or the space of the language offsets or language-expressing directions.

\paragraph{Constructing the language subspace}
The language subspace is constructed as the space spanned by the difference vectors between pairs of translation equivalents, i.e. points expressing an identical concept in two different languages
Formally,
$$ U := \mathrm{span}\{ x_{c,l} - x_{c, l'} | c\in \mathcal{C}, l, l' \in \mathcal{L}, l \neq l' \} $$
Here, $x_{c,l}$ is unit-normalized.
We are now interested in the effective dimensionality of $U$. 

\paragraph{Computing effective dimensionality}

We use participation ratio as a measure of effective dimensionality, in line with previous work \citep{barman-etal-2026-geometry}.
Let $M$ stack the language offsets $x_{c,l}-x_{c,l'}$ for all $c$ and unordered pairs $(l,l')$, and let $s_i$ be the singular values of $M$ after row-centering. 
We use
$$d_{\mathrm{lang}}=\frac{\left(\sum_i s_i\right)^2}{\sum_i s_i^2}.$$
This behaves like a soft dimension count.
Intuitively, if all singular values of active directions are equal, it returns the number of active directions. 
Instead, if the spectrum is dominated by a few directions, it returns a smaller value.
We then report $\mathrm{effdim_{lang}} := \frac{d_{\mathrm{lang}}}{D} \in (0,1]$, where $D$ is the ambient dimensionality of the space.

\paragraph{Trends}

\begin{figure}[!ht]
    \begin{minipage}{0.39\linewidth}
    \centering
    \includegraphics[scale=0.9]{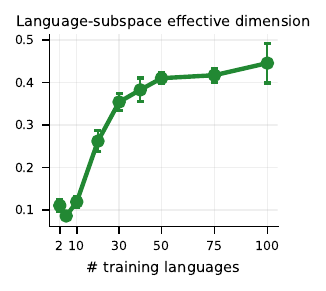}
    \caption{Effective dimensionality of the language subspace of all training languages on \texttt{BOUQuET} for non-cursed training setting. Error bars show standard deviation over five seeds.}
    \label{fig:language-subspace-dimension-bouquet}
    \end{minipage}
    \hfill
    \begin{minipage}{0.56\linewidth}
    \centering
    \includegraphics[width=\linewidth]{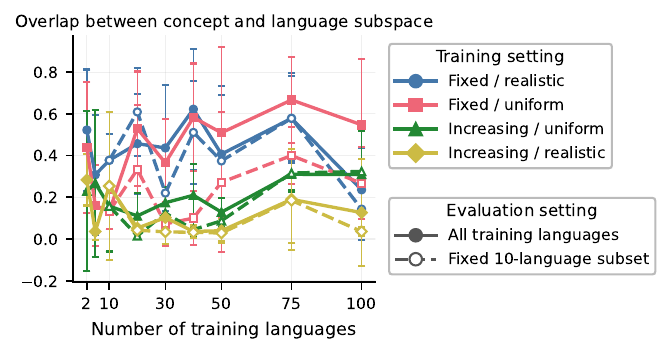}
    \caption{Overlap between concept and language subspace, using \texttt{BOUQuET}. \vspace{3.5em}}
    \label{fig:overlap}
\end{minipage}
\end{figure}

We are interested in $\mathrm{effdim_{lang}}$ for models that maintain the multilinguality conditions.
We construct the language subspace over all training languages, and so we only look at \texttt{aggregate-all} non-cursed configurations.
There is only one such configuration.
See \Cref{fig:language-subspace-dimension-bouquet} for trend of $\mathrm{effdim_{lang}}$ on \texttt{BOUQuET} (similar for other datasets) on this configuration.

We find that the model consumes considerably more effective dimensionality to represent language information with higher language count, going up to 50\% of ambient dimensionality.

\paragraph{Interaction between language and concept subspace}

The above experiment raises a natural next point of comparison with our constructive solution.
Namely: is the language subspace orthogonal to the concept subspace, or are semantic and language directions entangled?
Note that in our constructive solution, the concept and language subspaces are orthogonal by construction.

We construct the language subspace as above, and analogously construct a concept subspace using same-language different-concept pairs.
We then consider the minimal orthonormal bases of both subspaces that capture $k=20\%$ of energy with dimensionality $d_c$ and $d_l$, and quantify the overlap between these directions using principal angle between the subspaces, adjusted for random overlap.
Given $m=\min(d_c,d_l)$, we formulate this as:

$$
O_{\mathrm{CL}}=
\frac{\frac{1}{m}\sum_{i=1}^{m}\sigma_i^2-\frac{\max(d_c,d_l)}{D}}
{1-\frac{\max(d_c,d_l)}{D}}.
$$
where $O_{\mathrm{CL}} \le 1$ and higher values mean more overlap.
Motivating this briefly: it can be shown that if $V_{\mathrm{concept}}\in\mathbb{R}^{D\times d_c}$ and $V_{\mathrm{lang}}\in\mathbb{R}^{D\times d_l}$ are  the orthonormal bases capturing $k$ proportion of energy, then, $\sigma_i=\operatorname{svd}(V_{\mathrm{concept}}^\top V_{\mathrm{lang}})_i=\cos\theta_i$, where $\theta_i$ are principal angles between $V_{\mathrm{concept}}$ and $V_{\mathrm{lang}}$.
It can also be shown that the expected overlap between two subspaces with randomly chosen dimensions given a total dimensionality $D$ is: 
$$\mathbb{E}\!\left[\frac{1}{m}\sum_{i=1}^{m}\cos^2\theta_i\right]=\frac{\max(d_c,d_l)}{D}$$
So our metric $O_\mathrm{CL}$ is a measure of the squared cosines of principal angles between the two subspaces of interest, adjusted for random.

We plot this metric for all settings, shown in \Cref{fig:overlap}. 
We observe that the overlap is generally high across settings and number of training languages, including non-cursed settings.
This is in contrast to the ideal constructive solution, where the concept and language subspace are orthogonal, with zero overlap.

In general, our preliminary experiments indicate that empirical embedding spaces at our small scale do not behave similar to our ideal constructive solution even in non-cursed conditions.
We leave it to future work to explore the underlying mechanisms of the (non-)curse of multilinguality for embedding space structure in conducive settings and at scale, for various architectures and training configurations.

\end{document}